\def\eqref#1{equation~\ref{#1}}
\def\1{\bm{1}}
\DeclareMathAlphabet{\mathsfit}{\encodingdefault}{\sfdefault}{m}{sl}
\SetMathAlphabet{\mathsfit}{bold}{\encodingdefault}{\sfdefault}{bx}{n}
\DeclareMathOperator*{\argmin}{arg\,min}
\numberwithin{equation}{section} 
\newtheorem{theorem}{Theorem}[section]                   
\newtheorem{definition}[theorem]{Definition}
\newcommand\Algphase[1]{%
\vspace*{-.7\baselineskip}\Statex\hspace*{\dimexpr-\algorithmicindent-2pt\relax}\rule{\columnwidth}{0.4pt}%
\Statex\hspace*{-\algorithmicindent}\textbf{#1}%
\vspace*{-.7\baselineskip}\Statex\hspace*{\dimexpr-\algorithmicindent-2pt\relax}\rule{\columnwidth}{0.4pt}%
}
\tikzstyle{sensor}=[draw, fill=blue!20, text width=4em, 
\tikzstyle{normalCell}=[draw, fill=orange!20, text width=3.2em, 
\tikzstyle{conv}=[draw, fill=blue!20, text width=4.2em, 
\tikzstyle{operation}=[draw, fill=white!20, text width=3.2em, 
\tikzstyle{cell_in}=[draw, fill=green!20, text width=3.0em, 
\tikzstyle{cell_out}=[draw, fill=yellow!20, text width=3.0em, 
\tikzstyle{square}=[draw, fill=blue!20, minimum size=2em]
\tikzstyle{choice}=[draw, fill=green!20, text width=3.2em, 
\tikzstyle{customNode}=[draw, fill=purple!25, text width=3.2em, 
\tikzstyle{ann} = [above, text width=5em, text centered]
\tikzstyle{wa} = [sensor, text width=10em, fill=red!20, 
\tikzstyle{sc} = [sensor, text width=13em, fill=red!20, 
\tikzstyle{texto} = [above, text width=6em, text centered]
\tikzstyle{oneShotModel}=[draw, fill=white!10, text width=3.0em, 
\tikzstyle{sampleNode}=[draw, fill=white!10, text width=5.0em, 
\tikzstyle{CSNode}=[draw, fill=white!10, text width=4.0em, 
\tikzstyle{restrictionNode}=[draw, fill=white!10, text width=5.0em, 
\newcommand{\backgrounda}[6]{%
  \begin{pgfonlayer}{background}
    \path (#1.west |- #2.north)+(-0.2,1.1) node (a1) {};
    \path (#3.east |- #4.south)+(+0.2,-0.25) node (a2) {};
    \path[fill=#5!20,rounded corners, draw=black!50, dashed]
      (a1) rectangle (a2);
    \path (a1.west |- a1.north)+(3.6, -0.8) node (u1)[texto]
      {\large #6};
  \end{pgfonlayer}}
\newcommand{\backgroundb}[6]{%
  \begin{pgfonlayer}{background}
    \path (#1.west |- #2.north)+(-0.5,1.157) node (a1) {};
    \path (#3.east |- #4.south)+(+0.5,-0.25) node (a2) {};
    \path[fill=#5!20,rounded corners, draw=black!50, dashed]
      (a1) rectangle (a2);
    \path (a1.west |- a1.north)+(5.5, -0.8) node (u1)[texto]
      {\large #6};
  \end{pgfonlayer}}
 \newcommand{\backgroundc}[6]{%
  \begin{pgfonlayer}{background}
    \path (#1.west |- #2.north)+(-0.3, 1.28) node (a1) {};
    \path (#3.east |- #4.south)+(+0.3,-1.42) node (a2) {};
    \path[fill=#5!20,rounded corners, draw=black!50, dashed]
      (a1) rectangle (a2);
    \path (a1.west |- a1.north)+(1.1, -0.65) node (u1)[texto]
      {\small #6};
  \end{pgfonlayer}}
\newcommand{\backgroundd}[6]{%
  \begin{pgfonlayer}{background}
    \path (#1.west |- #2.north)+(-0.3, 0.5) node (a1) {};
    \path (#3.east |- #4.south)+(+0.3,-0.5) node (a2) {};
    \path[fill=#5!20,rounded corners, draw=black!50, dashed]
      (a1) rectangle (a2);
    \path (a1.west |- a1.north)+(2.3, -0.65) node (u1)[texto]
      {\small #6};
  \end{pgfonlayer}}
\begin{document}
%
\title{Hyperparameter Optimization in Neural Networks via Structured Sparse Recovery}
%
%
%

\author{%
  Minsu Cho\thanks{The authors would like to thank Amitangshu Mukherjee, Soumik Sarkar, and Alberto Speranzon for helpful discussions. This work was performed when the authors were at Iowa State University (Ames, IA), and were supported in part by NSF grants CCF-1566281, CAREER CCF-1750920/2005804, GPU gift grants from the NVIDIA Corporation, and a faculty fellowship from the Black and Veatch Foundation. Parts of this paper appeared as a conference publication in~\cite{cho2019reducing} and the Arxiv preprint~\cite{cho2019one}.} \\
  Tandon School of Engineering\\
  New York University\\
  \texttt{mc8065@nyu.edu} \\
  \And
  Mohammadreza Soltani \\ 
  ECE Department \\
  Duke University \\
  \texttt{mohammadreza.soltani@duke.edu} \\
  \And
  Chinmay Hegde \\
  Tandon School of Engineering\\
  New York University\\
  \texttt{chinmay.h@nyu.edu}\\
}

\maketitle

\begin{abstract}
In this paper, we study two important problems in the automated design of neural networks --- Hyper-parameter Optimization (HPO), and Neural Architecture Search (NAS) ---  through the lens of sparse recovery methods. 

In the first part of this paper, we establish a novel connection between HPO and structured sparse recovery. In particular, we show that a special encoding of the hyperparameter space enables a natural group-sparse recovery formulation, which when coupled with HyperBand (a multi-armed bandit strategy), leads to improvement over existing hyperparameter optimization methods. Experimental results on image datasets such as CIFAR-10 confirm the benefits of our approach.

In the second part of this paper, we establish a connection between NAS and structured sparse recovery. Building upon ``one-shot'' approaches in NAS, we propose a novel algorithm that we call CoNAS by merging ideas from one-shot approaches with a techniques for learning low-degree sparse Boolean polynomials. We provide theoretical analysis on the number of validation error measurements. Finally, we validate our approach on several datasets and discover novel architectures hitherto unreported, achieving competitive (or better) results in both performance and search time compared to the existing NAS approaches.
\end{abstract}


%

\newcommand{\var}[1]{\text{\texttt{#1}}}
\newcommand{\func}[1]{\text{\textsl{#1}}}

\makeatletter
\newcounter{phase}[algorithm]
\newlength{\phaserulewidth}
\newcommand{\setphaserulewidth}{\setlength{\phaserulewidth}}
\newcommand{\phase}[1]{%
  \Statex\leavevmode\llap{\rule{\dimexpr\labelwidth+\labelsep}{\phaserulewidth}}\rule{\linewidth}{\phaserulewidth}
  \Statex\strut\refstepcounter{phase}\textbf{Stage~\thephase~--~#1}
  \Statex\leavevmode\llap{\rule{\dimexpr\labelwidth+\labelsep}{\phaserulewidth}}\rule{\linewidth}{\phaserulewidth}}
\makeatother

\setphaserulewidth{.7pt}

\newlength{\whilewidth}
\settowidth{\whilewidth}{\algorithmicwhile\ }
\algdef{SE}[parWHILE]{parWhile}{EndparWhile}[1]
  {\parbox[t]{\dimexpr\linewidth-\algmargin}{%
     \hangindent\whilewidth\strut\algorithmicwhile\ #1\ \algorithmicdo\strut}}{\algorithmicend\ \algorithmicwhile}%
\algnewcommand{\parState}[1]{\State%
  \parbox[t]{\dimexpr\linewidth-\algmargin}{\strut #1\strut}}

\section{Introduction}

\subsection{Motivation}
Despite the success of complex deep learning (DL) models in many data-driven tasks, these models often require a substantial manual effort (involving trial-and-error) for choosing a suitable set of hyperparameters and architectures such as learning rate, regularization coefficients, dropout ratio, filter sizes, and network size. Hyper-parameter optimization (HPO) addresses the problem of searching for suitable hyperparameters that solve a given machine learning problem. Furthermore, neural architecture search (NAS) methods seek to automatically construct a suitable architecture of neural networks with competitive (or better) results over hand-designed architectures with as small computational budget as possible. 

In this paper, we propose two novel methods to solve HPO and NAS problems by borrowing ideas from sparse recovery and {compressive sensing} (CS)~\cite{candes2006near, donoho2006compressed}. CS has received significant attention in both signal processing and statistics over the last decade, and has influenced the development of numerous advances in nonlinear and combinatorial optimization. Compressive sensing provides an alternative to the conventional sampling paradigm by efficiently recover a sparse signal either exactly or approximately from a small number of measurements. 

In the context of HPO/NAS, the main challenge is to evaluate the test/validation performance of a (combinatorially) large number of hyperparameters/architectures candidates. To overcome this, our methods leverage sparse recovery techniques 
to find an approximate, yet competitive, solution through fewer number of candidate performance evaluations (measurements). 

\subsection{Our Contributions}
Below, we describe our contributions for solving the HPO and NAS problems. The core to both of our solutions is an adaptation of techniques for learning low-degree sparse Boolean polynomial functions.

\subsubsection{Hyperparameter Optimization} We propose an extension to the Harmonica algorithm (\cite{hazan2017hyperparameter}), a spectral approach for recovering a sparse Boolean representation of an objective function relevant to the HPO problem. While Harmonica successfully finds important categorical hyperparameters, it does not excel in finding numerical, continuous hyperparameters (such as learning rate). We propose a new group-sparse representation on continuous hyperparameter values that reduces not only the dimension of the search space, but also groups the hyperparameters; this improves both accuracy and stability of HPO. We provide visualizations of the achieved approximate minima by our proposed algorithm in hyperparameter space and demonstrates its success for classification tasks.

 
\subsubsection{Neural Architecture Search} We propose a new NAS algorithm called CoNAS (Compressive sensing-based Neural Architecture Search), which merges ideas from sparse recovery with the so-called ``one-shot'' architecture search methods~\cite{bender2018understanding}, \cite{li2019random} (Please see Section~\ref{subsec: Background on One-Shot NAS} for more details). The main idea is to introduce a new \emph{search space} by considering a Boolean function of the possible operations as a loss function of the NAS problem. We utilize the sparse Fourier representation of the Boolean loss function as a new \emph{search strategy} to find the (close)-optimal operations in the network. The numerical experiments show that CoNAS can discover a deep convolutional neural network with reproducible test error of $2.74 \pm 0.12\%$ for classifying of CIFAR-10 dataset. The discovered architecture outperforms the state-of-the-art methods, including DARTs~\cite{liu2018darts}, ENAS~\cite{pham2018efficient}, and random search with weight-sharing (RSWS)~\cite{li2019random} (see Table~\ref{table: CIFAR10 Comparison Table}). 

\subsubsection{Theoretical Analysis} Finally, we analyze the performance of CoNAS by giving a sufficient condition on the number of performance evaluations of sub-architectures; this provides approximate bounds on training time. This, to our knowledge, is one of the first theoretical results in the NAS literature and may be of independent interest.

\subsection{Techniques}

In a nutshell, our solutions to these problems are based on~\cite{hazan2017hyperparameter} and \cite{stobbe2012learning}, which have shown how to encode set functions using a sparse polynomial basis representation. 

Since all HPO algorithms are computationally expensive, they should be parallelizable and scalable. Moreover, their performance should be at least as good as the random search methods~\cite{falkner2018bohb}. To achieve these goals, we use the idea of Hyperband from multi-armed bandit problems for parallelization coupled with the Harmonica algorithm by~\cite{hazan2017hyperparameter} for scalability and high performance. In particular, we first construct a Boolean cost function by binarizing the hyperparameter space, and evaluating it with a small number of (sampled) training examples. This implies the equivalency of estimating the Fourier coefficients and finding the best hyperparameters. Since only a few number of hyperparameters can result in low cost function, estimating the Fourier coefficients boils down to a sparse recovery problem from a small number of measurements. However, our results indicates that the support of non-zero coefficients show some structure. By imposing this structure, we achieve better overall test error for a given computational budget.


Similar to HPO, NAS is also computationally intense. To address this issue, we utilize a one-shot approach~\cite{bender2018understanding,li2019random} in which instead of evaluating several candidate architectures, a single ``base'' model is pre-trained. Then, a set of sub-networks is selected and evaluated on a validation set, and the best-performing sub-network is chosen to build a final architecture. We model the sub-network selection as a sparse recovery problem by considering a function $f$ that maps sub-architectures to a measure of performance (validation loss). Assuming that $f$ can be written as a linear combination of sparse low-degree polynomial basis functions, we can reconstruct $f$ using a small number of sub-network evaluations; hence, reducing overall computation time. Compared to~\cite{liu2018darts,pham2018efficient}, our search space allows a search over a more diverse set of candidate architectures.


The rest of this paper is organized as follows. In Section~\ref{related}, we review some prior work on HPO and NAS problems. Section~\ref{perlim} provides some definitions and mathematical backgrounds. In Section~\ref{HPO_alg} and~\ref{HPO_exp}, we respectively introduce our HPO algorithm and the supportive experimental results. In section~\ref{NAS_alg} and~\ref{NAS_exp}, we present our NAS algorithm with rigorous theoretical analysis and experimental results, respectively. We conclude this paper in section~\ref{conc}.

\section{Related Work}
\label{related}

\subsection{Prior Work in Hyperparameter Optimization (HPO)}
HPO methods based on brute-force techniques such as exhaustive grid search are prohibitive for large hyperparameter spaces due to their exponential time complexity. One remedy for this problem is introduced by Bayesian Optimization (BO) techniques in which a prior distribution over the cost function is defined (typically a Gaussian process), and is updated after each ``observation'' (i.e., measurement of training loss) at a given set of hyperparameters~\cite{bergstra2011algorithms, hutter2011sequential, snoek2012practical, thornton2013auto, eggensperger2013towards, snoek2014input, ilievski2017efficient}. Subsequently, an acquisition function samples the posterior to form a new set of hyperparameters, and the process iterates. Despite the popularity of BO techniques, they often provide unstable performance, particularly in high-dimensional hyperparameter spaces. An alternative approach to BO is Random Search (RS), with efficient computational time, strong ``anytime'' performance with easy parallel implementation~\cite{bergstra2012random}.


Multi-armed bandit (MAB) methods adapt the random search strategy to allocate the different resources for randomly chosen candidates to speed up the convergence. However, random search and the BO approaches spend full resources. Successive Halving (SH) and Hyperband are two examples of MAB methods, which ignore the hyperparameters with poor performance in the early state~\cite{jamieson2016non,li2017hyperband,kumar2018parallel}. In contrast with BO techniques (which are hard to parallelize), the integration of BO and Hyperband achieve both advantages of guided selection and parallelization~\cite{wang2018combination, falkner2018bohb, bertrandhyperparameter}.

Gradient descent methods~\cite{bengio2000gradient,maclaurin2015gradient,luketina2015scalable,fu2016drmad, franceschi2017forward} (or more broadly, meta-learning approaches) have also been applied to solve the HPO problem, but these are only suitable to optimize continuous hyperparameters. Since this is a very vast area of current research, we do not compare our approach with these techniques.

While BO dominates model-based approaches, a recent technique called \emph{Harmonica} utilize a \emph{spectral} approach by applying sparse recovery techniques on a Boolean version of the objective function. This gives Harmonica the benefit of reducing the dimension of the hyperparameter space by quickly finding influential hyperparameters~\cite{hazan2017hyperparameter}.


\subsection{Prior Work in Neural Architecture Search}

Early NAS algorithms were using reinforced learning (RL) based controllers~\cite{zoph2018learning}, evolutionary algorithms \cite{real2018regularized}, or sequential model-based optimization (SMBO)~\cite{liu2018progressive}. The performance of these methods is competitive with the manually-designed architectures such as deep ResNets~\cite{he2016deep} and DenseNets~\cite{huang2017densely}. However, they require substantial computational resources, e.g., thousands of GPU-days. Other NAS approaches have focused on boosting search speeds by proposing novel search strategies, such as differentiable search technique~\cite{cai2018proxylessnas,liu2018darts,noy2019asap,luo2018neural,xie2018snas} and random search via sampling sub-networks from a one-shot super-network~\cite{bender2018understanding, li2019random}. In particular, DARTs~\cite{liu2018darts} 
is based on a bilevel optimization by relaxing the discrete architecture search space to a differentiable one via softmax operations. This relaxation makes it faster by orders of magnitude while achieving competitive performance compared to previous works~\cite{zoph2016neural, zoph2018learning, real2018regularized, liu2018progressive}. 

Other recent methods include RL approaches via weight-sharing ~\cite{pham2018efficient}, network transformations~\cite{cai2018efficient,elsken2018efficient,jaderberg2017population,jin2018efficient,liu2017hierarchical, hu2019efficient}, and random exploration~\cite{li2018massively,li2019random,sciuto2019evaluating,xie2019exploring}. None of these methods has explored utilizing the sparse recovery techniques for NAS. The closest approach to ours but a different objective is the one proposed by ~\cite{stobbe2012learning} which learns a sparse graph from a small number of random cuts.
While \cite{stobbe2012learning} emphasizes on \emph{linear} measurements, CoNAS takes a different perspective by focusing on measurements that map sub-networks to performance, which are fundamentally \emph{nonlinear}.

In \cite{bender2018understanding}, the authors  provided an extensive experimental analysis on one-shot architecture search based on weight-sharing and correlation between the one-shot model (super-graph) and stand-alone model (sub-graph). The authors of~\cite{li2019random} proposed a simplified training procedures compare to \cite{bender2018understanding} without neither super-graph stabilizing techniques such as path dropout schedule on a direct acyclic graph (DAG) nor ghost batch normalization. We defer the details of super-graph training from~\cite{li2019random} in Section~\ref{subsec: Background on One-Shot NAS}.

The combination of random search via one-shot models with weight-sharing provides the best competitive baseline results reported in the one-shot NAS literature. Our CoNAS approach improves upon these reported results.


\section{Preliminaries}
\label{perlim}

\subsection{Fourier Analysis of Boolean Functions}
Throughout this paper, we denote the vectors with bold letters. Also, $[n]$ denotes the set $\{1,2\dots,n\}$. A {real-valued Boolean function} is one that maps $n$-bit binary vectors (i.e., vertices of a hypercube) to real values: $f : \{-1,1\}^n \rightarrow \mathbb{R}$. Such functions can be represented in a basis comprising real multilinear polynomials called the \emph{Fourier} basis as follows~\cite{o2014analysis}. 
\begin{definition}
\label{def:fourier basis}
For $S \subseteq [n]$, define the parity function $\chi_S: \{-1, 1\}^n \rightarrow \{-1, 1\}$ such that $\chi_S(\bm{\alpha}) = \prod_{i \in S} \alpha_i$. Then, the Fourier basis is the set of all $2^n$ parity functions $\{\chi_S\}$.
\end{definition}
The key fact is that the basis of parity functions forms an $K$-bounded orthonormal system (BOS) with $K=1$. That is: 
\begin{align}
\label{eq:BOS}
\langle \chi_S, \chi_T \rangle = 
\begin{cases}
    1, & \text{if } S = T\\
    0, & \text{if } S \neq T
\end{cases}
\:\:\:\:\:\:\:\:\:\:\: \textrm{and} \\ \sup_{\bm{\alpha} \in \{-1, 1\}^n} |\chi_S(\bm{\alpha})| \leq 1 \:\:\: \textrm{for all }S \subseteq [n],
\end{align}
As it has been shown in~\cite{o2014analysis}, any Boolean function $f$ has a unique Fourier representation as $f(\bm{\alpha}) = \sum_{S \subseteq [n]} \hat{f}(S) \chi_S(\bm{\alpha})$, with Fourier coefficients $\hat{f}(S) = \mathbb{E}_{\bm{\alpha}\in\{-1,1\}^n}[f(\bm{\alpha})\chi_S(\bm{\alpha})]$ where
expectation is taken with respect to the uniform distribution over the vertices of the hypercube. 
For many objective function in machine learning, the Fourier spectrum of the function is concentrated on monomials of small degree ($\leq d$) (e.g., decision trees~\cite{hazan2017hyperparameter}). Leveraging this property simplifies the Fourier expansion by limiting the number of basis functions. Let $\mathcal{P}_d \subseteq 2^{[n]}$ be a fixed collection of Fourier basis 
such that $\mathcal{P}_d \coloneqq \{\chi_S \subseteq 2^{[n]}: |S| \leq d\}$. Then, $\mathcal{P}_d \subseteq 2^{[n]}$ induces a function space, consisting of all functions of order $d$ or less, denoted by $\mathcal{H}_{\mathcal{P}_d} := \{f : Supp[\hat{f}] \subseteq \mathcal{P}_d\}$. For example, $\mathcal{P}_2$ allows us to express the function $f$ with at most $\sum_{l=0}^{d} \binom{n}{l} \equiv \mathcal{O}(n^2)$ Fourier coefficients. Next, we define the restriction of a function $f$ to some index set $J$.


\begin{definition}
\label{def:restriction}
Let $f: \{-1, 1\}^n \rightarrow \mathbb{R}$, $(J, \overline{J})$ be a partition of $[n]$, and $\mathbf{z} \in \{-1, 1\}^{\overline{J}}$. The restriction of $f$ to $J$ using $z$ denoted by $f_{J|\mathbf{z}}: \{-1, 1\}^J \rightarrow \mathbb{R}$ is the subfunction of $f$ given by fixed coordinates in $\overline{J}$ to the values of $\mathbf{z}$.
\end{definition}

\section{Hyperparameter Optimization}
\label{HPO_alg}

In this section, we present our HPO algorithm. We restrict our attention to discrete domains (we assume that continuous hyperparameters have been appropriately binned). Let $f: \{-1, 1\}^n \mapsto \mathbb{R}$ be the loss function we want to optimize. Assume there exists $k$ different types of hyperparameters. In other words, we allocate $n_i$ bits to the $i^{\textrm{th}}$ hyperparameter category such that $\sum_{i=1}^{k} n_i=n$. The task of HPO involves searching the global minimizer(s) of the following optimization problem as the best hyperparameters: 
\begin{align}
    \mathbf{\alpha^*} = \argmin_{\mathbf{\alpha}\in\{-1,1\}^n}f(\mathbf{\alpha}) .
\label{opt_problem}
\end{align}


We propose \emph{Polynomial Group-Sparse Recovery within Hyperband} (PGSR-HB), a new HPO search algorithm which significantly reduces the size of hyperparameter space. We combine Hyperband, the multi-armed bandit method that balances exploration and exploitation from uniformly random sampled hyperparameter configurations, with a group sparse version of Polynomial Sparse Recovery. 
Algorithm~\ref{alg:PGSRHB} shows the pseudocode of PGSR-HB.

\begin{algorithm}[t]
\caption{\textsc{PGSR-HB}}
\label{alg:PGSRHB}
\begin{algorithmic}[1]
\State\textbf{Inputs:} Resource $R$, scaling factor $\eta$, total cycle $c$ 
\State\textbf{Initialization:} $s_{max}=\lfloor \log_{\eta}(R)\rfloor$, $B=(s_{max}+1)R$, input history $H_{input}=\emptyset$, output history $H_{output}=\emptyset$
\For{$round=1:c$}
\For{$s \in \{s_{max}, s_{max}-1,\ldots,0\}$}
\State $n=\lceil \frac{B}{R} \frac{\eta^s}{(s+1)}\rceil$, $r=R\eta^{-s}$
\State T = {\fontfamily{pcr}\selectfont PGSR\_Sampling(n)}
\For{$i \in \{0,\ldots,s\}$}
\State $n_i = \lfloor n\eta^{-i}\rfloor$
\State $r_i = r\eta^i$
\State $L = \{f(t,r_i): t \in T\}$
\State $H_{input,r_i} \leftarrow H_{input,r_i} \bigcup T$
\State $H_{output,r_i} \leftarrow H_{output,r_i} \bigcup L$
\State $T = \text{top}_k(T,L,\lfloor \frac{n_i}{\eta} \rfloor)$
\EndFor
\EndFor
\EndFor
\State\textbf{return} Configuration with the smallest loss
\Algphase{Sub-algorithm - PGSR Sampling}
\State\textbf{Input:} $H_{input}$, $H_{output}$, sparsity $s$, polynomial degree $d$, minimum observations $T$, randomness ratio $\rho$
\If{every $|H_{output,r}| < T$} \Return random sample from original domain of $f$.
\EndIf
\State Pick $H_{input,r}$ and $H_{output,r}$ with largest $r$: $|H_{output,r}|\geq T$.
\State Group Fourier basis based on hyperparameter structure.
\State Solve $$\mathbf{\alpha}^* = \argmin_{\mathbf{\alpha}} \frac{1}{2}\|\mathbf{y} - \sum_{l=1}^{m}\Psi^{l} \mathbf{\alpha}^{l}\|_{2}^{2} + \lambda \sum_{l=1}^{m} \sqrt{p_{l}} \|\mathbf{\alpha}^{l}\|_2$$
\State Let $S_1, \ldots S_s$ be the indices of the largest coefficient of $\alpha$. Then, $g(\mathbf{\alpha}) = \sum_{i \in [s]} \alpha_{S_i} \chi_{S_i}(\mathbf{\alpha})$ and $J = \bigcup_{i=1}^{s}S_i$
\State With probability $\rho$, return random sample from original domain of $f$; else return random sample from reduced domain of $f_{J,\mathbf{\alpha}^{*}}$.
\end{algorithmic}
\end{algorithm}

PGSR-HB adopts the decision-theoretic approach of Hyperband, but with the additional tracking the history of all loss values from different resources. Lines 7-14 in Algorithm~\ref{alg:PGSRHB}, illustrates the Successive Halving (SH) subroutine of Hyperband, a early performance (e.g. validation loss) in the process of training indicates which hyperparameter configurations are worth investing further resources, and which ones are fit to discard. We defer the pseudocode of SH and Hyperband in the Appendix~\ref{appendix: SH and Hyperband}. 

Now, let $R$ denote the (units of computational) resource to be invested in one round to observe the final performance of the model. Let $\eta$ denote a scaling factor and $c$ be the total number of rounds. Defining $s_{max}=\log_{\eta}R$, the total budget spent from SH is given by $B=(s_{max}+1)R$. In addition, Line 6 of Algorithm~\ref{alg:PGSRHB} invokes a sub-routine (discussed below) to sample $n$ configurations given as follows:
\begin{align}
    n = \lceil \frac{B}{R} \frac{\eta^s}{(s+1)}\rceil
\end{align}
Also, the test loss is computed with $r = R\eta^{-s}$
epochs of training. The function $f(t,r_i)$ in Algorithm~\ref{alg:PGSRHB} (Line 10) returns the intermediate test loss of a hyperparameter configuration $t$ with $r_i$ training epochs. Since the test loss is a metric to measure the performance of the model, the algorithm keeps only the top $\frac{1}{\eta}$ configurations (Line 13) and repeats the process by increasing number of epochs by the factor of $\eta$ until $r$ reaches to resource $R$. While SH introduces a new hyperparameter $s$, SH aggressively explores the hyperparameter space as $s$ close to $s_{max}$. SH with $s=0$ is equivalent to random search (aggressive exploitation).\footnote{The algorithm with one cycle contains $(s_{max}+1)$ subroutines of SH during which it tries different levels of exploration and exploitation with all possible values of $s$.}

\subsection{PGSR Sampling}

As PGSR-HB collects the outputs of the function $f$, PGSR-Sampling estimates Fourier coefficients of the function $f$ using techniques from sparse recovery to reduce the hyperparameter space. 
We now introduce a simple mathematical expression that efficiently induces additional sparsity in the Fourier representation of $f$. Given $k$ categories ($i=1,\ldots,k$), let $g$ and $h$ be two functions that map binary numbers $\mathbf{x}_i$ and $\mathbf{y}_i$ with respectively $\alpha_i$ and $\beta_i$ digits to the set of integers with cardinality $2^{\alpha_i}$ and $2^{\beta_i}$.
Then we express the $i^{th}$ real-valued hyperparameter, $hp_i$ with corresponding binary digits $\mathbf{x}_i$ and $\mathbf{y}_i$ in a log-linear manner as follows:
\begin{align}
\label{eq:HpRepresentation}
    hp_i = 10^{g(\mathbf{x}_i)} \cdot h(\mathbf{y}_i)
\end{align}

Our experimental results show how the above nonlinear binning representation induces sparsity on function $g$.
While PSR in Harmonica recovers the Boolean function with Lasso~\cite{tibshirani1996regression}, the intuitive extension (arising from the above log-linear representation) is to replace sparse recovery with Group Lasso~\cite{yuan2006model}. This is used in Line 23 of Algorithm~\ref{alg:PGSRHB} as we group them based on the $g$ and $h$ based on the  hyperparameter categories. Let $\mathbf{y}\in \mathbb{R}^m$ be the observation vector and $\mathbf{x} \in \{-1, 1\}^n$, and devide the hyperparameters into $k+k$ groups (corresponding to functions $g$ and $h$). Let $\Psi^{l}$ is the submatrix of $\Psi \in \mathbb{R}^{m \times (\sum_{i=1}^{d}\binom{n}{i})}$ where its columns match the $l^{th}$ group. Similarly, $\alpha^{l}$ is a weight vector corresponding to the submatrix $\Psi^{l}$ and $p_{l}$ denotes the length of vector $\alpha^{l}$. In order to construct the submatrices as the Fourier basis with the hyperparameter structure, we assume that there exist a set of groups $G = \{g_1, \ldots, g_k, h_1, \ldots, h_k\}$ as defined above. If there are $\gamma$ possible combinations of groups from $G$ such that a $d$-degree Fourier basis exists ($\gamma = \sum_{i=1}^{d}\binom{2k}{i}$), we derive the $k$ submatrices $\Psi^{1}, \ldots, \Psi^{\gamma}$ using Definition \ref{def:fourier basis}. Then the problem is equivalent to a convex optimization problem known as the Group Lasso:
\begin{align}
\label{eq:GroupLasso}
    \min_{\alpha} \frac{1}{2}\|y - \sum_{l=1}^{m}\Psi^{l} \alpha^{l}\|_{2}^{2} + \lambda \sum_{l=1}^{m} \sqrt{p_{l}} \|\alpha^{l}\|_2
\end{align}

Finally, the algorithm requires the input $\rho$, representing a reset probability parameter that produces random samples from the original reduced hyperparameter space. This parameter prevents the biased in different PGSR stages.

\subsection{Differences between PGSR-HB and Harmonica}
The standard Harmonica method samples the measurements under a uniform distribution. It then runs the search algorithm to recover the function $f$ with PSR (the sparse recovery through $l_1$ penalty, or standard Lasso). Moreover, the number of randomly sampled measurements and its resources (training epochs) needs to be given before starting the search algorithm. we note that the reliability of measurements hugely depends on the number of resources used on each sampled point. While investing enormous resources in recovering Fourier coefficients guarantees the success of the Lasso recovery, this is inefficient with respect to total budget. On the other hand, collecting the measurements with small resources make PSR fail to provide the correct guidance for the outer search algorithm. Since PGSR-HB gathers all the function outputs -- from cheap resources to the most expensive resources -- PGSR-HB eliminates the need to set an explicit number of samples and training epochs as in Harmonica. We have also tried other penalties, and observed that the regularized regression tends to learn slower than the models without a regularization; thus, misleading the search algorithm with the worst performance.
While the experimental results in \cite{hazan2017hyperparameter} shows promising results in finding the influential categorical hyperparameters such as presence/absence of the Batch-normalization layer, it cannot be used directly in optimizing the numerical hyperparameters such as learning rate, weight decay, and batch size. One the other hand, PGSR-HB overcomes this limitation of Harmonica with the log-linear representation in~\eqref{eq:HpRepresentation} and Group Lasso~\eqref{eq:GroupLasso}.

\section{HPO Experimental Results}
\label{HPO_exp}
\subsection{Robustness Test}

We verify the robustness of PGSR-HB by generating a test loss surface picking two hyperparameter categories as shown in Figure~\ref{fig:lossSurface}. We calculate the test loss by training 120 epochs for classification of CIFAR-10 data set. We use the convolutional neural network architecture from the cuda-convnet-$82\%$ model that has been used in previous work (\cite{jamieson2016non} and \cite{li2017hyperband}). The range of learning rate and the weight-decay on the first convolutional layer is set to be in $10^{-6}$ to $10^{2}$. We use the log scale on both horizontal and vertical axis for the visualize of the loss surface. 

Table~\ref{table:LrvsConv1} compares the performance of PGSR and PSR with ~\eqref{eq:HpRepresentation}, and PSR with evenly spaced hyperparameter values in log scale. The third and fourth columns in Table~\ref{table:LrvsConv1} list the reduced hyperparameter space for learning rate and $l_2$ regularization coefficient for first convolution layer for each algorithm.
From the comparison between the reduced search space and hyperparameter loss surfaces from Figure~\ref{fig:lossSurfaceXaxis} and Figure~\ref{fig:lossSurfaceYaxis}, we observe \eqref{eq:HpRepresentation} reduce the search domain more accurately than without \eqref{eq:HpRepresentation}. 
Imposing structured sparsity on the hyperparameters by grouping them not only helps PGSR to return the correct guidance, but also it stabilize the lasso coefficient $\lambda$ as shown in the test loss surfaces (Figure~\ref{fig:lossSurfaceXaxis} and Figure~\ref{fig:lossSurfaceYaxis}) with PGSR results in Table~\ref{table:LrvsConv1}.

\begin{figure}[t]
\begin{center}
    \setlength{\tabcolsep}{.1pt}
    \renewcommand{\arraystretch}{.1}
    \begin{tabular}{cc}
    \includegraphics[trim = 30mm 105mm 77mm 105mm, clip, width=0.8\linewidth]{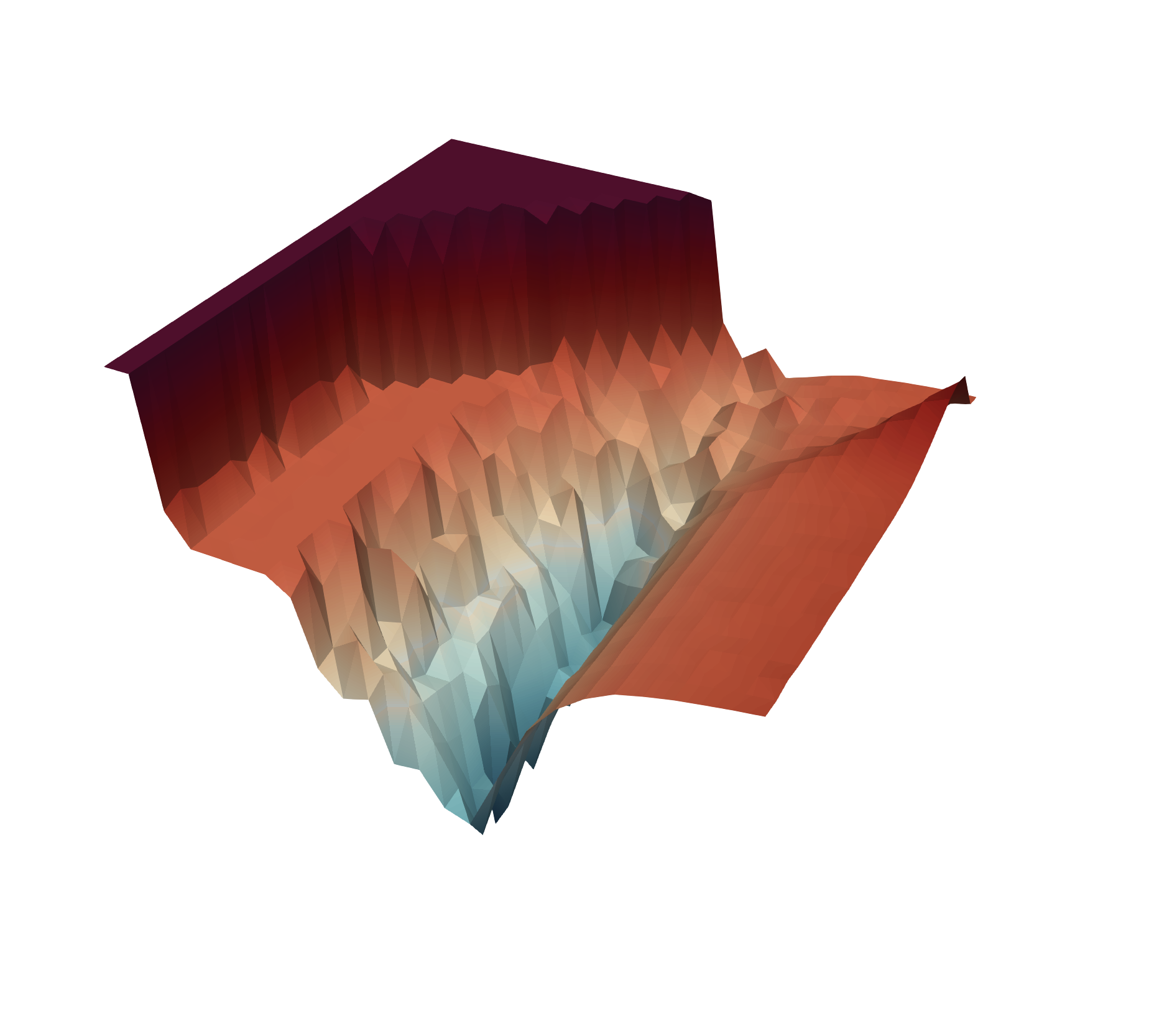}
    \end{tabular}
\end{center}
\caption{\emph{Test loss surface with two hyperparameters. Learning rate vs conv1 l2 penalty.}}
\label{fig:lossSurface}
\end{figure}

\begin{figure}[t]
\begin{center}
    \setlength{\tabcolsep}{.1pt}
    \renewcommand{\arraystretch}{.1}
    \begin{tabular}{cc}
    \includegraphics[trim = 35mm 40mm 35mm 40mm, clip, width=.8\linewidth]{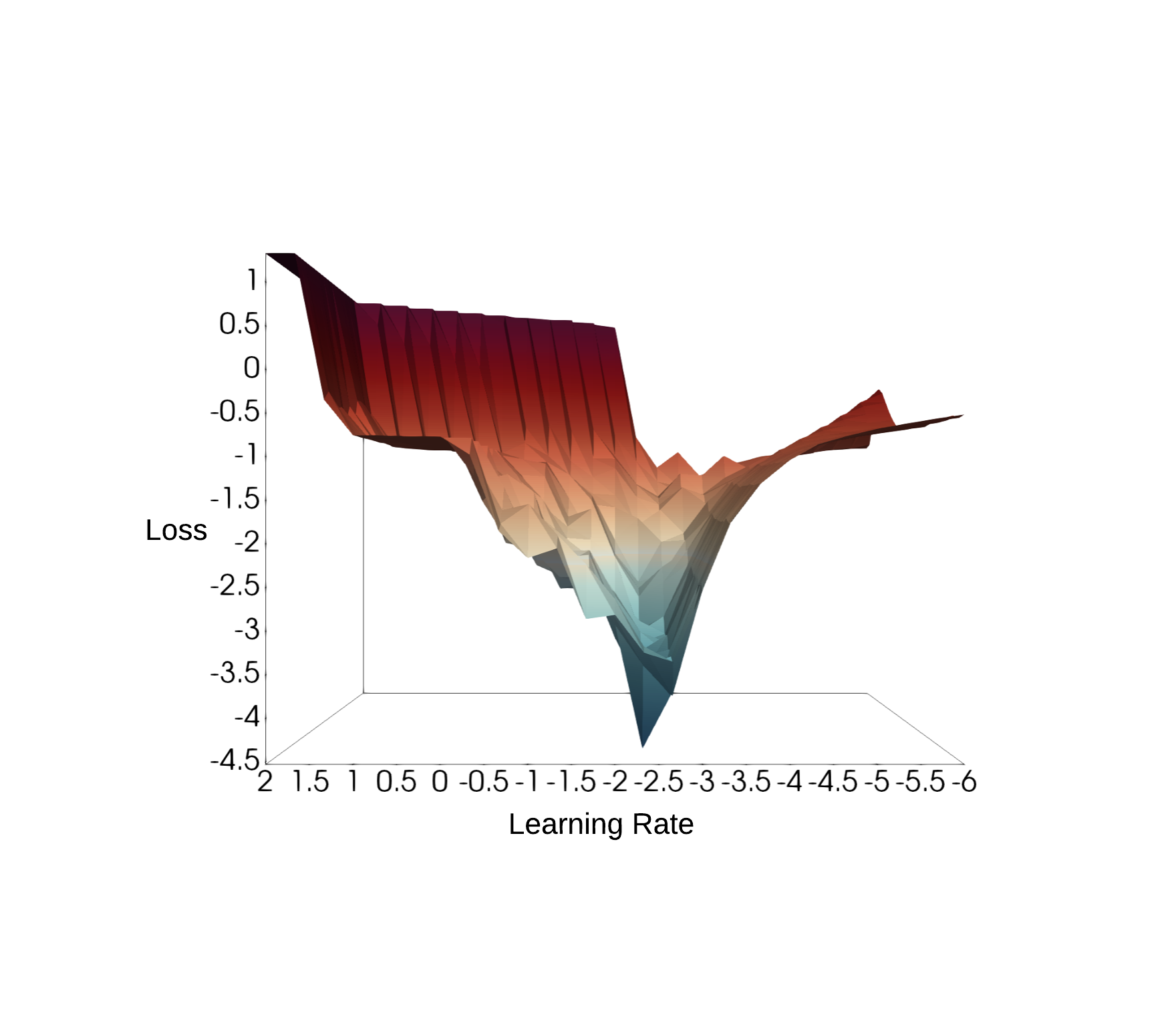}
    \end{tabular}
\end{center}
\caption{\emph{The view from learning rate axis.}}
\label{fig:lossSurfaceXaxis}
\end{figure}

\begin{figure}[t]
\begin{center}
    \setlength{\tabcolsep}{.1pt}
    \renewcommand{\arraystretch}{.1}
    \begin{tabular}{cc}
    \includegraphics[trim = 35mm 40mm 35mm 40mm, clip, width=0.8\linewidth]{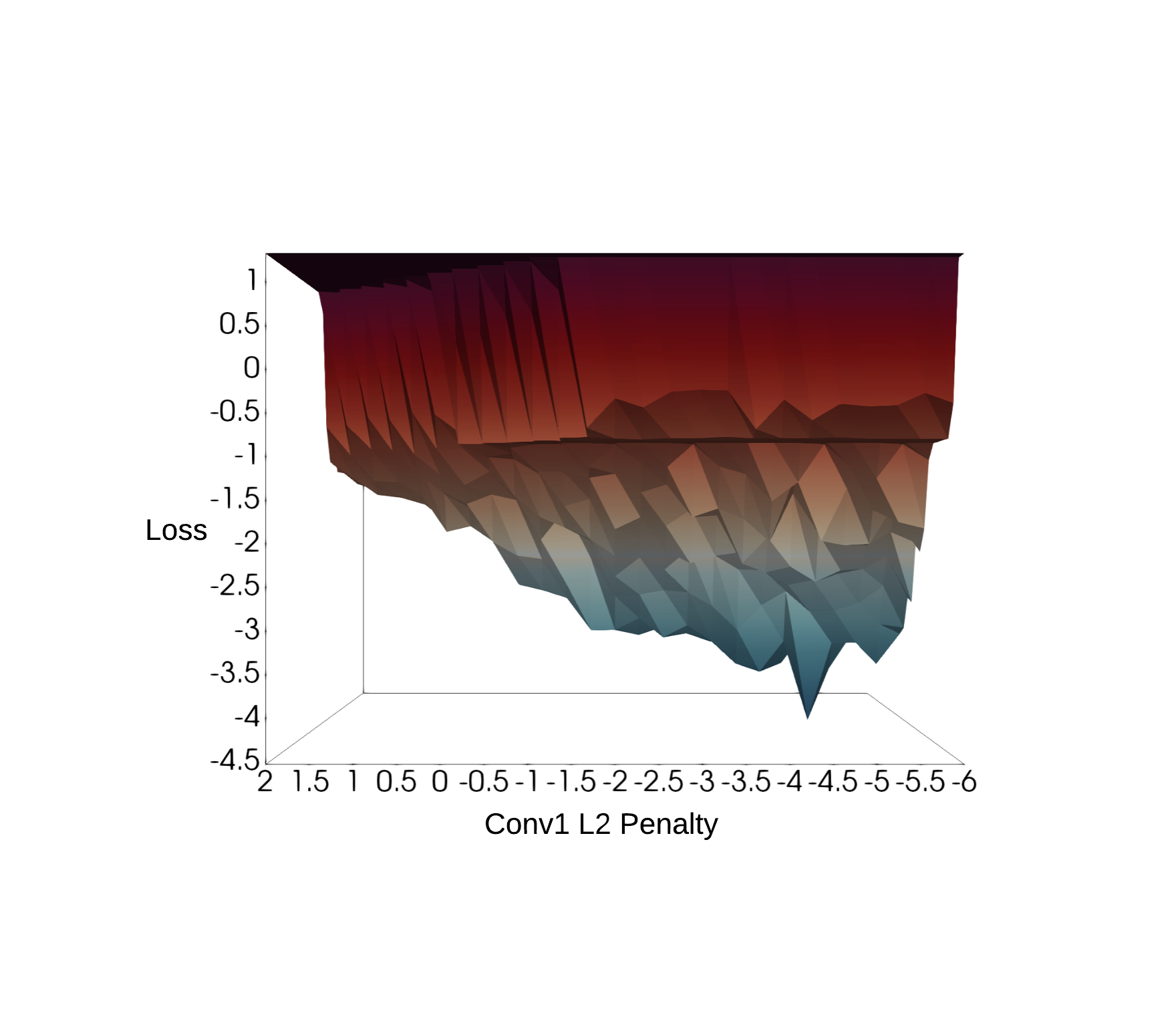}
    \end{tabular}
\end{center}
\caption{\emph{The view from conv1 l2 penalty axis.}}
\label{fig:lossSurfaceYaxis}
\end{figure}


\begin{table}[!h]
    \centering
    \captionsetup{width=\linewidth}
    \caption{Reduced Search Domain Comparison on Learning Rate and Conv1 L2} 
    \label{table:LrvsConv1}
    \begin{threeparttable}
    \begin{tabular}{c c c c c c}
        \toprule
        \multicolumn{1}{l}{\textbf{Method}} & $\lambda$ & Learning Rate & Conv1 Penalty \\ 
        \midrule

        \multicolumn{1}{l}{PGSR} & $0.5$ & $[\mathbf{10^{-3}, 10^{-2}}]$ & $[\mathbf{10^{-5}, 10^{-4}}]$ \\

        \multicolumn{1}{l}{PGSR} & $1.0$ & $[\mathbf{10^{-3}, 10^{-2}}]$ & $[\mathbf{10^{-5}, 10^{-4}}]$ \\

        \multicolumn{1}{l}{PGSR} & $2.0$ & $[\mathbf{10^{-3}, 10^{-2}}]$ & $[\mathbf{10^{-5}, 10^{-4}}]$ \\

        \multicolumn{1}{l}{PSR} & $0.5$ & $[\mathbf{10^{-3}, 10^{-2}}]$ & $[10^{-6}, 10^{2}]$ \\

        \multicolumn{1}{l}{PSR} & $1.0$ & $[10^{-4}, 10^{-3}]$ & $[10^{-3}, 10^{-2}]$ \\

        \multicolumn{1}{l}{PSR} & $2.0$ & $[10^{0}, 10^{2}]$ & $[10^{-3}, 10^{-2}]$ \\
        \multicolumn{1}{l}{PSR w/o~\eqref{eq:HpRepresentation}} & $0.5$ & $[10^{-4}, 10^{-2}]$ & $[10^{-6}, 10^{-3}]$ \\ 
        \multicolumn{1}{l}{PSR w/o~\eqref{eq:HpRepresentation}}  & $1.0$ & $[10^{-4}, 10^{-2}]$ & $[10^{-6}, 10^{-3}]$ \\
        \multicolumn{1}{l}{PSR w/o~\eqref{eq:HpRepresentation}}  & $2.0$ & $[10^{-4}, 10^{-2}]$ & $[10^{-6}, 10^{-4}]$ \\ 
        
        \bottomrule
    \end{tabular}
    \end{threeparttable}
\end{table}


\begin{table}[!h]
    \centering
    \captionsetup{width=\linewidth}
    \caption{CNN Test Loss and Accuracy on CIFAR-10} 
    \label{table:CNNPerformance}
    \begin{threeparttable}
    \begin{tabular}{c c c c c}
        \toprule
        \multicolumn{1}{l}{\textbf{Algorithm}} & RS 2x & SH & HB & PGSR-HB \\ 
        \midrule

        \multicolumn{1}{l}{Loss (I)} & $0.7118$ & $0.7001$ & $0.7150$ & $\mathbf{0.6455}$ \\
        \multicolumn{1}{l}{Acc (I)} & $81.17\%$ & $79.69\%$ & $78.74\%$ & $\mathbf{82.79\%}$ \\
        
        \midrule
        \multicolumn{1}{l}{Loss (II)} & $0.6988$ & $0.7179$ & $0.6921$ & $\mathbf{0.6764}$ \\
        \multicolumn{1}{l}{Acc (II)} & $79.51\%$ & $79.30\%$ & $81.67\%$ & $\mathbf{83.00\%}$ \\
        \midrule
        \multicolumn{1}{l}{Loss (III)} &$0.6850$& $0.6747$& $0.6960$& $\mathbf{0.6467}$\\
        \multicolumn{1}{l}{Acc (III)} &$79.02\%$& $79.80\%$& $\mathbf{81.47\%}$& $80.39\%$\\
        \midrule
        \multicolumn{1}{l}{Loss (IV)} &$0.7293$&$\mathbf{0.6499}$&$0.7215$&$0.6619$ \\
        \multicolumn{1}{l}{Acc (IV)} &$77.70\%$&$80.68\%$&$80.81\%$&$\mathbf{81.64\%}$\\

        \bottomrule
    \end{tabular}
    \end{threeparttable}
\end{table}


Next, we optimize the five categories of hyperparameters including the learning rate, three convolution layers' and $l_2$ regularization coefficient of the fully connected layer using the architecture and dataset used in the previous section. We train the network using the stochastic gradient descent without a momentum and diminish the learning rate by a factor $0.1$ every 100 epochs. We compare the test loss and accuracy of SH, Hyperband, doubled budgets Random Search with PGSR-HB algorithm. We set the resource, $R = 243$ and the discard ratio input $\eta=3$.  Training epochs is the same for all the algorithms except Random Search with times more training epochs. 
We run each algorithm in Table~\ref{table:CNNPerformance} for four different trials.
The results verify the effectiveness of reducing the hyperparameter space through PGSR as the new algorithm returns better performance for most of the trials. Moreover, PGSR-HB finds the optimal hyperparameters with $83\%$ test accuracy, outperforming the other algorithms from all trials. 

\section{Neural Architecture Search}
\label{NAS_alg}

In this section, we propose a a new algorithm referred to as Compressed-based Neural Architecture Search (CoNAS) for searching the CNN models. In addition, we support the performance of CoNAS with theoretical analysis. 
Our proposed algorithm is a novel perspective of merging the existing one-shot NAS methodology with the sparse recovery techniques. This boosts the search time to select a final candidate while outperforms existing the state-of-the-art methods. 

\subsection{Background on One-Shot NAS}
\label{subsec: Background on One-Shot NAS}
In this section, we first briefly describe one-shot NAS techniques with respect to the search space, performance estimation strategy, and search strategy~\cite{elsken2018neural}. Please see~\cite{bender2018understanding,li2019random} for more discussion. 

\subsubsection{Search Space}

Following~\cite{zoph2018learning}, the one-shot method searches the optimal cell as the building block (similar to \cite{szegedy2015going}, \cite{he2016deep}, \cite{huang2017densely}) to construct the final architecture. A cell is a directed acyclic graph (DAG) consisting of $N$ nodes. A j\textsuperscript{th} node, $n^{(j)}$ where $j \in [N]$, has directed edges $(i, j)$ from $n^{(i)}$ where $i < j$ and $i \in [N]$ such that $(i, j)$ transforms the node $n^{(i)}$ to $n^{(j)}$. Let $\mathcal{O}$ is the operation set (e.g. 3x3 max-pool, 3x3 average-pool, 3x3 convolution, identity in convolutional network), and there exists $|\mathcal{O}|$ direct edges between $n^{(i)}$ and $n^{(j)}$. Let $f^{(i, j)}$ maps $n^{(i)}$ to $n^{(j)}$ by summing all transformation of $n^{(i)}$ defined in $\mathcal{O}$. Each intermediate nodes (corresponding to latent representation in general) is computed by the addition of all transformation from predecessor nodes:
\begin{align}
\label{eq: intermediate nodes}
n^{(j)} = \sum_{i < j} f^{(i, j)}(n^{(i)})
\end{align}
As each nodes are connected from previous nodes by summing all possible operations in its operation set $\mathcal{O}$, one-shot NAS literature also describe this as ``{weight-sharing}" methods.

\subsubsection{Performance Estimation Strategy}

The one-shot NAS models the surrogate function which takes the network architecture encoded vectors and returns the estimated performance of the sub-network. \cite{li2019random} and \cite{bender2018understanding} adopts the weight-sharing paradigm following the \eqref{eq: intermediate nodes}. Let $\bm{\alpha} \in \{-1, 1\}^n$ be the architecture encoder string where $n$ is the total number of edges in the DAG. The one-shot surrogate model, the function $f: \{-1, 1\}^n \rightarrow \mathbb{R}$, which trained only once estimates the performance of each sub-network without training individually. While early methods including \cite{zoph2018learning} required expensive computational budget, weight-sharing paradigm significantly increased the search efficiency. 

We summarize the one-shot NAS training protocol from \cite{li2019random}, the training algorithm we adopted for training the super-network (surrogate model). \cite{li2019random} suggests the simple super-network training protocol as shown in Algorithm~\ref{alg: RSWS Training}.

\begin{algorithm}[!ht]
\caption{\textsc{pseudo code of One-Shot Model Training from \cite{li2019random}}}
\label{alg: RSWS Training}
\begin{algorithmic}[1]
\While{not converged}
    \State Randomly sample the architecture from DAG.
    \State Calculate the backward pass for a given minibatch.
    \State \parbox[t]{0.9\linewidth-\algorithmicindent}{Update the weights only corresponds to the edges activated from randomly sampled architecture.}
\EndWhile
\end{algorithmic}
\end{algorithm}

\subsubsection{Search Strategy}

Both \cite{bender2018understanding} and \cite{li2019random} explores the optimal architecture based on random search. While vanilla random search trains each candidates until its convergence (which requires exhaustive computation), one-shot model estimates the performance of candidate architecture by one forward pass evaluation. This allows to explore immense number of candidates than the vanilla random search given equivalent computational budgets. 

\subsection{Proposed Algorithm}

Our proposed algorithm, Compressive sensing-based Neural Architecture Search (CoNAS), infuses ideas from learning a sparse graph (Boolean Fourier analysis) into one-shot NAS. CoNAS consists of two novel components: an expanded search space, and a more effective search strategy. Figure~\ref{fig:overview of CoNAS} shows the overview idea of CoNAS.

Our first ingredient is an expanded search space. Following the approach of DARTS~\cite{liu2018darts}, we define a directed acyclic graph (DAG) where all predecessor nodes are connected to every intermediate node with all possible operations. We represent any sub-graph of the DAG using a binary string $\bm{\alpha}$ called the \emph{architecture encoder}. Its length is the total number of edges in the DAG, and a $1$ (resp. $-1$) in $\bm{\alpha}$ indicates an active (resp. inactive) edge. 

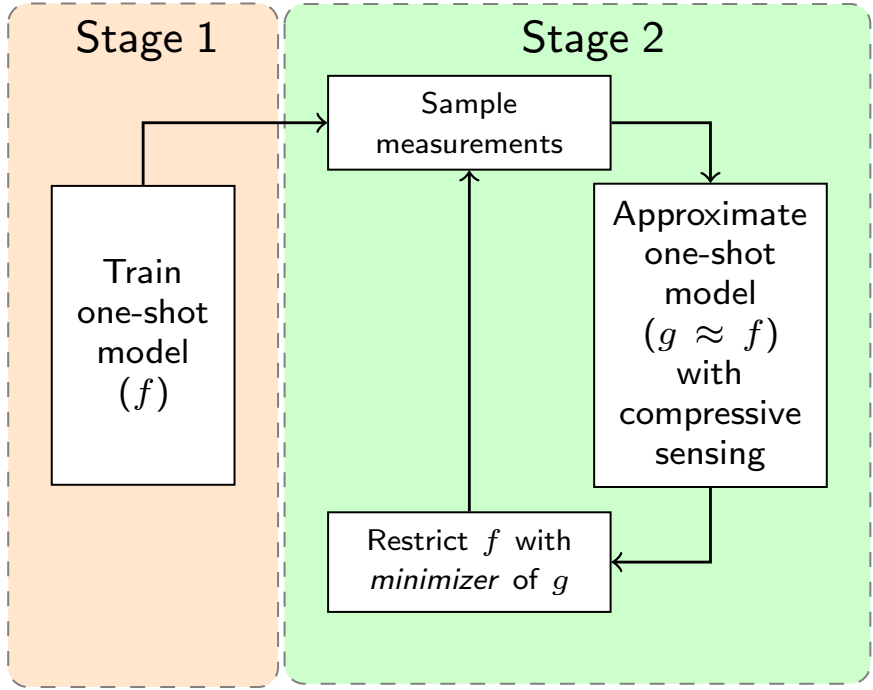
\begin{figure}[!t]
    \centering
\resizebox{0.75\linewidth}{!}{
\begin{tikzpicture}[font=\sffamily]
    \fontfamily{cmss}
    \tikzstyle{every node}=[font=\scriptsize]
    \node (oneShot) [oneShotModel] {Train \\ one-shot model \\($f$)};
    
    \path (oneShot)+(2.3, 1.5) node (sample) [sampleNode] {\tiny Sample \\measurements};
    \path (oneShot)+(4, 0) node (CS) [CSNode] {Approximate one-shot model ($g\approx f$) with compressive sensing};
    \path (oneShot)+(2.3, -1.6) node (restriction) [restrictionNode] {\tiny Restrict $f$ with \emph{minimizer} of $g$};
    
    \path [draw, ->, line width = 0.2mm] (oneShot.north) to (0, 1.5) to (sample.west); 
    \path [draw, ->, line width = 0.2mm] (sample.east) to (4, 1.5) to (CS.north);
    \path [draw, ->, line width = 0.2mm] (CS.south) to (4, -1.6) to (restriction.east);
    \path [draw, ->, line width = 0.2mm] (restriction.north) to (sample.south);
    \backgroundc{oneShot}{oneShot}{oneShot}{oneShot}{orange}{Stage 1};
    \backgroundd{sample}{sample}{CS}{restriction}{green}{Stage 2};

\end{tikzpicture}
}
    \caption{\sl Overview of CoNAS. A one-shot neural network model $f$ is pre-trained, and an appropriate sub-graph of $f$ is chosen by applying a sparse recovery technique. Iterative sparse recoveries allow to find the larger sub-graph from $f$.}
    \label{fig:overview of CoNAS}
\end{figure}

\begin{figure*}[!t]
\centering
    \resizebox{0.9\textwidth}{!}{
    \resizebox{\linewidth}{!}{
\begin{tikzpicture}
    \fontfamily{cmss}
    \tikzstyle{every node}=[font=\normalsize] 
    \node (output) [operation] {Output};
    \path (output.north)+(0, 0.7) node (soft) [operation] {Softmax};
    \path (soft.north)+(0, 0.7) node (cell3) [normalCell] {Cell3};
    \path (cell3.north)+(0, 0.7) node (cell2) [normalCell] {Cell2};
    \path (cell2.north)+(0, 0.7) node (cell1) [normalCell] {Cell1};
    \path (cell1.north)+(0, 0.7) node (stem2) [operation] {Stem2};
    \path (stem2.north)+(0, 0.7) node (stem1) [operation] {Stem1};
    \path (stem1.north)+(0, 0.7) node (image) [operation] {Image};
    
    \path [draw, ->, line width = 0.4mm] (soft.south) -- node [above] {} (output);
    \path [draw, ->, line width = 0.4mm] (cell3.south) -- node [above] {} (soft);
    \path [draw, ->, line width = 0.4mm] (cell2.south) -- node [above] {} (cell3);
    \path [draw, ->, line width = 0.4mm] (cell1) to [out=340, in=20] (cell3);
    \path [draw, ->, line width = 0.4mm] (cell1.south) -- node [above] {} (cell2);
    \path [draw, ->, line width = 0.4mm] (stem2) to [out=340, in=20] (cell2);
    \path [draw, ->, line width = 0.4mm] (stem2.south) -- node [above] {} (cell1);
    \path [draw, ->, line width = 0.4mm] (stem1) to [out=340, in=20] (cell1);
    \path [draw, ->, line width = 0.4mm] (image) to [out=340, in=20] (stem2);
    \path [draw, ->, line width = 0.4mm] (image.south) -- node [above] {} (stem1);
    
    \draw[dashed] (image.east)+(0.8, 0.3) -- +(0.8, -7.0); 
    
    \path (image.east) + (2.0, 0) node (previousCell1) [operation] {Cell$_{k-2}$};
    \path (image.east) + (7.3, 0) node (previousCell2) [operation] {Cell$_{k-1}$};
    \path (previousCell1.south) + (0, -1.5) node (choice1) [choice] {Choice1};
    \path (previousCell1.south) + (1.7, -1.5) node (choice2) [choice] {Choice2};
    \path (choice1.south) + (1, -1.0) node (node1) [customNode] {Node1};
    \path (previousCell1.south) + (3.5, -2.5) node (choice3) [choice] {Choice3};
    \path (previousCell1.south) + (5.2, -2.5) node (choice4) [choice] {Choice4};
    \path (node1.south) + (0, -0.7) node (choice5) [choice] {Choice5};
    \path (choice1.south) + (4, -2.5) node (node2) [customNode] {Node2};
    \path (image.east) + (4.6, -6) node (concat) [operation] {Concat.};
    
    \path [draw, ->, line width = 0.4mm] (previousCell1.south) to [out=270, in=90] (choice1);
    \path [draw, ->, line width = 0.4mm] (previousCell1) to [out=320, in=100] (choice3);
    \path [draw, ->, line width = 0.4mm] (previousCell2.south) to [out=270, in=90] (choice4);
    \path [draw, ->, line width = 0.4mm] (previousCell2) to [out=240, in=30] (choice2);
    \path [draw, dashed, ->, line width = 0.4mm] (choice1) to [out=260, in=120] (node1);
    \path [draw, dashed, ->, line width = 0.4mm] (choice2) to [out=280, in=60] (node1);
    \path [draw, dashed, ->, line width = 0.4mm] (choice3) to [out=260, in=110] (node2);
    \path [draw, dashed, ->, line width = 0.4mm] (choice4) to [out=280, in=70] (node2);
    \path [draw, ->, line width = 0.4mm] (node1) to [out=270, in=90] (choice5);
    \path [draw, dashed, ->, line width = 0.4mm] (choice5) to [out=330, in=140] (node2);
    \path [draw, ->, line width = 0.4mm] (node1) to [out=180, in=90] (2.4, 2.7) to [out=270, in=140] (concat.west); 
    \path [draw, ->, line width = 0.4mm] (node2) to [out=230, in=30] (concat.east);
    \draw [->, line width = 0.4mm] (concat.south) to (5.29, -0.2);
    

    \backgrounda{choice1}{choice1}{choice4}{concat}{orange}{\textbf{Cell}}; 
    \draw[dashed] (previousCell2.east)+(0.40, 0.3) -- +(0.40, -7.0);
    
    \path (previousCell2.east) + (1.5, 0) node (input1) [operation] {Input 1};
    \path (input1) + (1.5, 0) node (input2) [operation] {Input 2};
    \path (input2) + (1.5, 0) node (input3) [operation] {Input 3};
    \path (input1) + (9.0, 0) node (inputL) [operation] {Input L};
    \path (input1) + (4.5, -1.8) node (concat2) [operation] {Concat.};
    \path (concat2) + (0, -1.5) node (identity) [conv] {Identity};
    \path (concat2) + (-2, -1.5) node (conv5x5_1) [conv] {5x5};
    \path (concat2) + (-2, -2.5) node (conv5x5_2) [conv] {5x5};
    \path (concat2) + (-4, -1.5) node (conv3x3_1) [conv] {3x3};
    \path (concat2) + (-4, -2.5) node (conv3x3_2) [conv] {3x3};
    \path (concat2) + (2, -1.5) node (maxPool) [conv] {Max Pool};
    \path (concat2) + (4, -1.5) node (avgPool) [conv] {Avg Pool};
    \path (input1) + (4.5, -6) node (sum) [operation] {Sum};
    
    \draw [->, line width = 0.4mm] (input1.south) to [out=270, in = 180] (concat2) ;
    \path [draw, ->, line width = 0.4mm] (input2.south) to [out=270, in = 180] (concat2);
    \path [draw, ->, line width = 0.4mm] (input3.south) to [out=270, in = 180] (concat2);
    \path [draw, ->, line width = 0.4mm] (inputL.south) to [out=270, in = 0] (concat2);
    
    \draw [dashed, ->, line width = 0.4mm] (concat2) to [out=200, in=40] node [above] {\normalsize $\alpha_{i}$} (conv3x3_1);
    \path [draw, ->, line width = 0.4mm] (conv3x3_1) to [out=270, in=90] (conv3x3_2);
    \path [draw, dashed, ->, line width = 0.4mm] (concat2) to [out=220, in=60] node [below, pos=0.4] {\normalsize $\alpha_{i+1}$} (conv5x5_1);
    \path [draw, ->, line width = 0.4mm] (conv5x5_1) to [out=270, in=90] (conv5x5_2);
    \draw [dashed, ->, line width = 0.4mm] (concat2) to [out=270, in=90] node [] {\normalsize $\alpha_{i+2}$} (identity);
    \path [draw, dashed, ->, line width = 0.4mm] (concat2) to [out=320, in=130] node [above, pos=0.8] {\normalsize $\alpha_{i+3}$} (maxPool);
    \path [draw, dashed, ->, line width = 0.4mm] (concat2) to [out=340, in=140] node [above] {\normalsize $\alpha_{i+4}$} (avgPool);
    \path [draw, dashed, ->, line width = 0.4mm] (conv3x3_2) to [out=270, in=180] (sum);
    \path [draw, dashed, ->, line width = 0.4mm] (conv5x5_2) to [out=270, in=170] (sum);
    \path [draw, dashed, ->, line width = 0.4mm] (identity) to [out=270, in=90] (sum);
    \path [draw, dashed, ->, line width = 0.4mm] (maxPool) to [out=270, in=10] (sum);
    \path [draw, dashed, ->, line width = 0.4mm] (avgPool) to [out=270, in=0] (sum);
    \draw [->, line width = 0.4mm] (sum.south) to (14.675, -0.2);
    
    \backgroundb{conv3x3_1}{concat2}{avgPool}{sum}{green}{\textbf{Choice $i$}};
\end{tikzpicture}}}
    \caption{\sl Diagram inspired by~\cite{bender2018understanding}. The example architecture encoder $\bm{\alpha}$ samples the sub-architecture for $N=5$ nodes (two intermediate nodes) with five different operations. Each component in $\bm{\alpha}$ maps to the edges one-to-one in all \emph{Choice} blocks in a cell. If a bit in $\bm{\alpha}$ corresponds to $1$, the edge activates, while $-1$ turns off the edge. Since the CNN search space finds both \emph{normal cell} and \emph{reduce cell}, the length of $\bm{\alpha}$ is equivalent to $(2+3) \cdot 5 \cdot 2 = 50$.} 
    \label{fig:diagram}
\end{figure*}
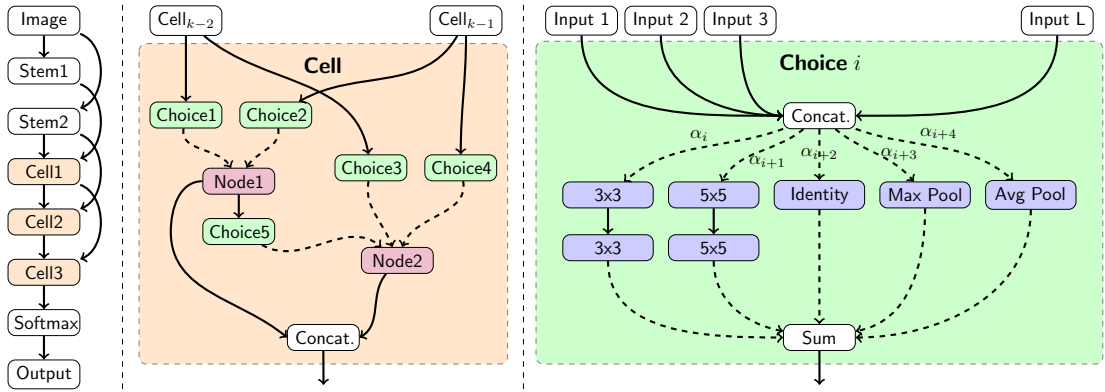

Figure~\ref{fig:diagram} gives an example of how the architecture encoder $\bm{\alpha}$ samples the sub-architecture of the fully-connected model in case of a convolutional neural network. The goal of CoNAS is to find the ``best'' encoder $\bm{\alpha^*}$, which is "close enough" to the global optimum returning the best validation accuracy by constructing the final model with $\bm{\alpha^*}$ encoded sub-graph.

\begin{figure}[!t]
\centering
\subfloat[Normal Cell]{\includegraphics[width=\linewidth]{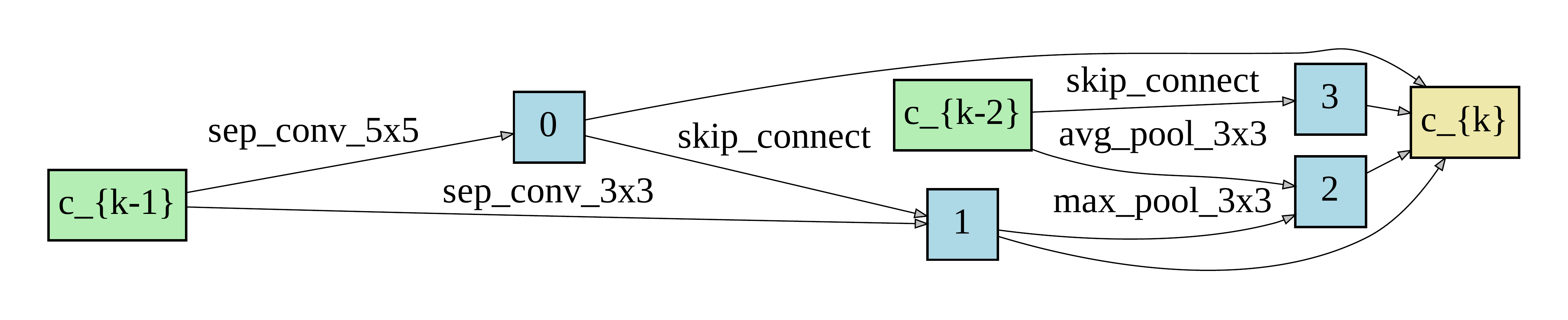}}
\vfil
\subfloat[Reduce Cell]{\includegraphics[width=\linewidth]{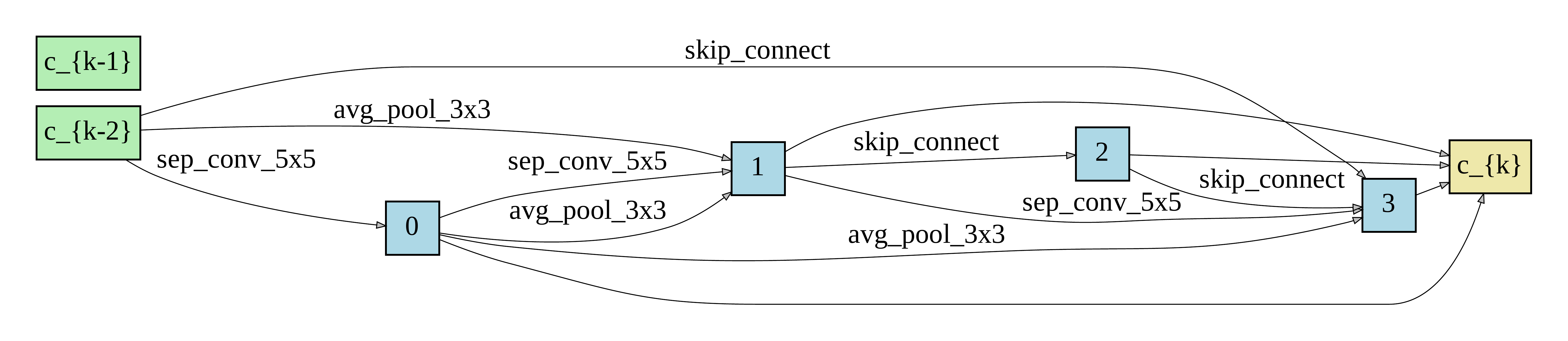}}
\caption{Convolution Cell found from CoNAS. The reduce cell found from CoNAS have a missing connection between $c_{k-1}$ and intermediate nodes which is a valid architecture in our search space.}
\label{fig:cnn cell}
\end{figure}
Since each edge can be switched on and off independently, the proposed search space allows exploring a cell with more diverse connectivity patterns than DARTS~\cite{liu2018darts}. Moreover, the number of possible configurations 
exceeds similar previously proposed search spaces with constrained wiring rules \cite{li2019random,pham2018efficient,zoph2018learning,real2018regularized}. 

We propose a compressive measuring strategy to approximate the one-shot model with a Fourier-sparse Boolean function. Let $f: \{-1, 1\}^n \rightarrow \mathbb{R}$ map the sub-graph of the one-shot pre-trained model encoded by $\bm{\alpha}$ to its validation performance.  Similar to \cite{hazan2017hyperparameter}, 
we collect a small number of function evaluations of $f$, and reconstruct the Fourier-sparse function $g \approx f$ via sparse recovery algorithms with randomly sampled measurements.
Then, we solve $\argmin_{\bm{\alpha}}g(\bm{\alpha})$ by exhaustive enumeration over all coordinates in its sub-cube $\{-1, 1\}^{\overline{J}}$ where $(J, \overline{J})$ partitions $[n]$ (Definition~\ref{def:restriction})\footnote{This is similar to the idea of de-biasing in the \emph{Hard-Thresholding (HT)} algorithm~\cite{foucart2017mathematical} where the support is first estimated, and then within the estimated support, the coefficients are calculated through least-squares estimation.}. If the solution of the $\argmin_{\bm{\alpha}}g(\bm{\alpha})$ does not return enough edges to construct the cell (some intermediate nodes are disconnected), we simply connect the intermediate nodes to the previous cell output, $\text{Cell}_{k-2}$, using the Identity operation (this does not increase neither the model size nor number of multiply-add operations). Larger cells can be found from multiple iterations by restricting the approximate function $g$ and with fixing the bit values found in the previous solution, and randomly sampling sub-graphs in the remaining edges. 

\begin{algorithm}[h!]
  \caption{\textsc{Pseudocode of CoNAS}}
  \label{alg: CoNAS}
  \begin{algorithmic}[1]
      \State\textbf{Inputs:} Number of one-shot measurements $m$, stage $t$, sparsity $s$, lasso parameter,  $\lambda$, Bernoulli $p$
      \phase{Training the One-Shot Model}
        \Procedure{Model Training}{}
        \While{not converged}
            \State \parbox[t]{0.9\linewidth-\algorithmicindent}{
            Randomly sample a sub-architecture encoded binary vector $\bm{\alpha}$ according to Bernoulli($p$)}
            \State \parbox[t]{0.9\linewidth-\algorithmicindent}{ 
            Update $w_{\bm{\alpha}}$ by descending $\nabla_{w_{{\bm{\alpha}}}} \mathcal{L}_{train}(w_{{\bm{\alpha}}})$}
        \EndWhile
        \EndProcedure
    \phase{Search Strategy}
    \Procedure{One-Shot Model Approximation via Compressive Sensing}{}
        \For{$k \in \{1, \ldots, t\}$}
            \State Collect $\mathbf{y} = (f(\bm{\alpha}_1), f(\bm{\alpha}_2), \ldots, f(\bm{\alpha}_m))^\top$.
            \State Solve $$\mathbf{x}^* = \argmin_{\mathbf{x}} \|\mathbf{y} - \mathbf{A} \mathbf{x}\|_2^2 + \lambda\|\mathbf{x}\|_1$$
            \State \parbox[t]{0.9\linewidth-\algorithmicindent}{
            Let $x_1^*, x_2^*, \ldots, x_s^*$ be the $s$ absolutely largest coefficients of $\mathbf{x}^*$.  Construct}
            $$g(\bm{\alpha}) = \sum_{i=1}^{s} x_i^* \chi_i(\bm{\alpha})$$
            \State \parbox[t]{0.9\linewidth-\algorithmicindent}{
            Compute minimizer $\mathbf{z} = \argmin_{\bm{\alpha}} g(\bm{\alpha})$ and let $\overline{J}$ the set of indices of $z$}
            \State $f = f_{J|\mathbf{z}}$
        \EndFor
        \State \parbox[t]{0.9\linewidth-\algorithmicindent}{ 
        Construct the cell by activating the edge where $z_i = 1$ where $i \in [n]$.}
    \EndProcedure
  \end{algorithmic}
\end{algorithm}

We now describe CoNAS in detail, with pseudocode provided in Algorithm~\ref{alg: CoNAS}. 
We first train a one-shot model with standard backpropagation but only updates the weight corresponding to the randomly sampled sub-graph edges for each minibatch.
Then, we randomly sample sub-graphs by generating architecture encoder strings $\bm{\alpha} \in \{-1, 1\}^n$ using a $Bernoulli(p)$ distribution for each bit of $\bm{\alpha}$ independently (We set $p = 0.5$).

In the second stage, we collect $m$ measurements of randomly sampled sub-architecture performance denoted by $\mathbf{y} = (f(\bm{\alpha}_1), f(\bm{\alpha}_2), \ldots, f(\bm{\alpha}_m))^T$.  Next, we construct the \emph{graph-sampling matrix} $\mathbf{A} \in \{-1, 1\}^{m \times |\mathcal{P}_d|}$ with entries 
\begin{align}
\label{eq:sampling matrix}
\mathbf{A}_{l, k} = \chi_{S_k}(\bm{\alpha}_l), \:\:\:\:\:\:\: l \in [m], k \in [|\mathcal{P}_d|], S \subseteq [n], |S| \leq d,
\end{align}
where $d$ is the maximum degree of monomials in the Fourier expansion, and $S_k$ is the index set corresponding to k\textsuperscript{th} Fourier basis. We solve the familiar Lasso problem~\cite{tibshirani1996regression}:
\begin{align}
\label{eq:Lasso}
    \mathbf{x}^* = \argmin_{\mathbf{x} \in \mathbb{R}^{|\mathcal{P}_d|}} \|\mathbf{y} - \mathbf{A} \mathbf{x}\|_2^2 + \lambda\|\mathbf{x}\|_1, 
\end{align}
to (approximately) recover the global optimizer $\mathbf{x}^*$, the vector contains the Fourier coefficients corresponding to $\mathcal{P}_d$. We define an approximate function $g \approx f$ with Fourier coefficients with the top-$s$ (absolutely) largest coefficients from $\mathbf{x}^*$, and compute $\bm{\alpha}^* = \argmin_{\bm{\alpha}} g(\bm{\alpha})$, resulting all the possible points in the subcube defined by the support of $g$ (this computation is feasible if $s$ is small). Multiple stages of sparse recovery (with successive restrictions to previously obtained optimal $\bm{\alpha}^*$) enable us to approximate additional monomial terms. 
Finally, we obtain a cell to construct the final architecture by activating the edges corresponding to all ${i \in [n]}$ such that $\alpha_i^* = 1$. 

\begin{table*}[!h]
    \captionsetup{width=.80\linewidth}
    \caption{\sl \textbf{Comparison with hand-designed networks and state-of-the-art NAS methods on CIFAR-10} (Lower test error is better). 
    The average test error of our experiment used five random seeds. Table entries with "-" indicates that either the field is not applicable or unknown. The methods listed in this table are trained with auxiliary towers and cutout augmentation. 
    } 
    \centering
    \begin{threeparttable}
    \begin{tabular}{c c c c c c}
        \toprule
        \multicolumn{1}{c}{} & \multicolumn{1}{c}{\textbf{Test Error}} & \textbf{Params} & \textbf{Multi-Add}& \textbf{Search} \\
        \multicolumn{1}{l}{\textbf{Architecture}} & (\%) & (M) & (M) & GPU days \\ 
        \midrule
        \multicolumn{1}{l}{PyramidNet~\cite{yamada2018shakedrop}} & $2.31$ &26 & - & -\\
        \multicolumn{1}{l}{AutoAugment~\cite{cubuk2018autoaugment}} & $1.48$ & $26$ & - & - & \\
        \midrule
        \multicolumn{1}{l}{ProxylessNAS~\cite{cai2018proxylessnas}} & $2.08$ & 5.7 & - & 4  \\
        \multicolumn{1}{l}{NASNet-A~\cite{zoph2018learning}} & $2.65$ & 3.3 & - & 2000 \\
        \multicolumn{1}{l}{AmoebaNet-B~\cite{real2018regularized}} & $2.55\pm0.05$ & 2.8 & - & 3150 \\
        \multicolumn{1}{l}{GHN\textsuperscript{+}~\cite{zhang2018graph}} & $2.84 \pm 0.07$ & 5.7 & - & 0.84 \\
        \multicolumn{1}{l}{SNAS~\cite{xie2018snas}} & $2.85 \pm 0.02$ & 2.8 & - & 1.5  \\
        \multicolumn{1}{l}{ENAS~\cite{pham2018efficient}}& 2.89 & 4.6 & - & 0.45 \\
        \multicolumn{1}{l}{DARTs~\cite{liu2018darts}} & $2.76\pm0.09$ & 3.3 & 548 & 4 \\
        \multicolumn{1}{l}{Random Search~\cite{liu2018darts}} & $3.29 \pm 0.15$ & 3.1 & - & 4 \\
        \multicolumn{1}{l}{ASHA~\cite{li2019random}} & $2.85|3.03\pm0.13$ & 2.2 & - & - \\
        \multicolumn{1}{l}{RSWS~\cite{li2019random}} & $2.71|2.85\pm0.08$ & 3.7 & 634 & 2.7 \\
        \multicolumn{1}{l}{DARTs\textsuperscript{\#}~\cite{li2019random}} &$2.62|2.78 \pm 0.12$ & 3.3 & - & 4 \\
        \midrule
        \multicolumn{1}{l}{DARTs\textsuperscript{\dag}} & $2.59|2.78\pm0.13$ & 3.4 & 576 & 4 \\ 
        \multicolumn{1}{l}{\textbf{CoNAS (t=1)}} & $\mathbf{2.57}|2.74\pm0.12$ & \textbf{2.3} & \textbf{386} & \textbf{0.4} \\
        \multicolumn{1}{l}{\textbf{CoNAS (t=4)}} & $2.55|2.62\pm0.06$ & 4.8 & 825 & 0.5 \\
        \multicolumn{1}{l}{\textbf{CoNAS (t=1, C=60)}\textsuperscript{+}+AutoAugment} & $1.87$ & 6.1 & 1019 & 0.4 \\
        \bottomrule
    \end{tabular}
    \begin{tablenotes}
    \small
    \item \textsuperscript{\#} DARTS experimental results from~\cite{li2019random}.
    \item \textsuperscript{\dag} Used DARTS search space with five operations for direct comparisons.
    \item \textsuperscript{+} `C' stands for the number of initial channels. Trained 1,000 epochs with AutoAugment.
    \end{tablenotes}
    \end{threeparttable}
    \label{table: CIFAR10 Comparison Table}
\end{table*}

\subsection{Theoretical Analysis for CoNAS.} 

The system of linear equations  $\mathbf{y} = \mathbf{Ax}$ with the graph-sampling matrix $\mathbf{A} \in \{-1, 1\}^{m \times O(n^d)}$, measurements $\mathbf{y} \in \mathbb{R}^m$, and Fourier coefficient vector $\mathbf{x} \in \mathbb{R}^{O(n^d)}$ is an ill-posed problem when $m \ll O(n^d)$ for large $n$. However, if the graph-sampling matrix satisfies \emph{Restricted Isometry Property (RIP)}, the sparse coefficients, $\mathbf{u}$ can be recovered: 
\begin{definition}
A matrix $\mathbf{A}\in\mathbb{R}^{m\times \mathcal{O}(n^d)}$ satisfies the restricted isometry property of order $s$ with some constant $\delta$ if for every $s$-sparse vector $\mathbf{u}\in\mathbb{R}^{\mathcal{O}(n^d)}$ (i.e., only $s$ entries are non-zero) the following holds:
$$(1-\delta) \|\mathbf{u}\|_2^2\leq\|\mathbf{Au}\|_2^2\leq(1+\delta) \|\mathbf{u}\|_2^2.$$
\end{definition}

We defer the history of improvements on the upper bounds of the number of rows from bounded orthonormal dictionaries (matrix $\mathbf{A}$) for which $\mathbf{A}$ is guaranteed to satisfy the restricted isometry property with high probability in Appendix~\ref{appendix: prior works CS}. To the best of our knowledge, the best known result with mild dependency on $\delta$ (i.e., $\delta^{-2}$) is due to~\cite{haviv2017restricted}, which we can apply for our setup. It is easy to check that the graph-sampling matrix $\mathbf{A}$ in our proposed CoNAS algorithm satisfies BOS for $K=1$ (Eq~\ref{eq:sampling matrix}).

\begin{theorem}
\label{thm:recovery on main}
Let the graph-sampling matrix $A\in\{-1,1\}^{m\times \mathcal{O}(n^d)}$ be constructed by taking $m$ rows (random sampling points) uniformly and independently from the rows of a square matrix $\mathbf{M}\in\{-1,1\}^{\mathcal{O}(n^d)\times \mathcal{O}(n^d)}$. Then the normalized matrix $\mathbf{A}$ with $m = \mathcal{O}(\log^2(\frac{1}{\delta})\delta^{-2}s\log^2(\frac{s}{\delta})d \log(n))$ with probability at least $1-2^{-\Omega(d\log n\log(\frac{s}{\delta}))}$ satisfies the restricted isometry property of order $s$ with constant $\delta$; as a result, every $s$-sparse vector $\mathbf{u}\in \mathbb{R}^{\mathcal{O}(n^d)}$ can be recovered from the sample $y_i$'s:
\begin{align}
\mathbf{y} = \mathbf{Au} = \big(\sum_{j=1}^{|\mathcal{O}(n^d)|} u_j\mathbf{A}_{i,j}\big)_{i=1}^{m},
\end{align}
by LASSO (equation~\ref{eq:Lasso}).
\end{theorem}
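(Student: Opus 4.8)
The plan is to reduce the statement to the sharp restricted-isometry guarantee for randomly subsampled bounded orthonormal systems, namely the bound of \cite{haviv2017restricted}, and then to invoke the standard passage from RIP to $\ell_1$-recovery. The only genuinely problem-specific step is to certify that the parity basis underlying the graph-sampling matrix is a BOS with constant $K=1$; everything after that is a black-box application, so the theorem is best viewed as a careful instantiation rather than a new estimate.

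First I would record that, by Definition~\ref{def:fourier basis}, the columns of $\mathbf{M}$ (equivalently of $\mathbf{A}$) are indexed by the sets $S\subseteq[n]$ with $|S|\le d$, so the number of columns is $N := |\mathcal{P}_d| = \sum_{\ell=0}^{d}\binom{n}{\ell} = \mathcal{O}(n^d)$ and hence $\log N = \mathcal{O}(d\log n)$. Next I would verify the two BOS conditions, both already contained in Equation~\ref{eq:BOS}: the parity functions are orthonormal with respect to the uniform measure on $\{-1,1\}^n$ (so that the $\tfrac{1}{\sqrt m}$-normalized matrix satisfies $\mathbb{E}[\mathbf{A}^\top\mathbf{A}] = \mathbf{I}$), and $\sup_{\bm{\alpha}}|\chi_S(\bm{\alpha})|\le 1$, giving boundedness constant $K=1$. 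Drawing $m$ rows uniformly and independently is then exactly the BOS-subsampling model to which \cite{haviv2017restricted} applies.

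I would then apply that theorem verbatim with $K=1$ and ambient dimension $N=\mathcal{O}(n^d)$. Substituting $\log N = \mathcal{O}(d\log n)$ into their sample-complexity bound yields $m = \mathcal{O}(\log^2(\tfrac{1}{\delta})\,\delta^{-2} s\,\log^2(\tfrac{s}{\delta})\,d\log n)$ with failure probability $2^{-\Omega(d\log n\,\log(s/\delta))}$, matching the claim. Finally, with the normalized $\mathbf{A}$ satisfying RIP of order (a constant multiple of) $s$ for a sufficiently small $\delta$, I would conclude robust recovery of every $s$-sparse $\mathbf{u}$ from $\mathbf{y}=\mathbf{A}\mathbf{u}$ via the LASSO \eqref{eq:Lasso} using the standard RIP-implies-$\ell_1$-recovery theorem (e.g.\ \cite{candes2006near, foucart2017mathematical}); the freedom in the order constant and in $\delta$ is absorbed into the $\mathcal{O}(\cdot)$.

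The main obstacle is bookkeeping rather than a hard inequality: ensuring the parameters of \cite{haviv2017restricted} line up, in particular that the effective RIP order demanded by the recovery theorem (typically $2s$ or $cs$) is consistent with the order for which RIP is asserted, and that the $1/\sqrt m$ normalization is applied so the dictionary has the correct unit-scale isometry. I would also flag the minor framing point that the domain carries $2^n$ points while only $N=\mathcal{O}(n^d)$ columns are retained; since the retained parities remain orthonormal under the uniform measure, the BOS hypothesis is unaffected, and the square-matrix description of $\mathbf{M}$ in the statement is read as the degree-restricted orthonormal dictionary from whose rows the samples are drawn.
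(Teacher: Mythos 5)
Your proposal is correct and follows essentially the same route as the paper's own proof: certify that the parity-basis graph-sampling matrix is a BOS with $K=1$, invoke the Haviv--Regev RIP bound for subsampled bounded orthonormal systems with $\log N = \mathcal{O}(d\log n)$, and then pass from RIP to exact $s$-sparse recovery via the standard Cand\`es-type guarantee (the paper uses $\delta < \sqrt{2}-1$ from Theorem~1.1 of the 2008 Cand\`es note). Your extra bookkeeping on the $1/\sqrt{m}$ normalization and the RIP order ($2s$ versus $s$) is sound and in fact slightly more careful than the paper's one-paragraph argument.
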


\begin{proof}
First, we note that the graph-sampling matrix $A$ is a BOS matrix with $K=1$; hence, directly invoking Theorem 4.5 of \cite{haviv2016list} to our setting, we can see that matrix $A$ satisfies RIP. Now according to Theorem 1.1 of \cite{candes2008restricted}, letting $\delta<\sqrt{2} -1$, the $l_1$ minimization or LASSO will recover exactly the $s$ sparse vector $u$. For instance, in our experiments, we have selected $m=1000$ which is consistent with our parameters, $d=2, s=10, n =140$. 
\end{proof}

Here, it is worthwhile to mention two points: first, the above upper bound on the number of rows of the graph-sampling matrix $A$ is the tightest bound (according to our knowledge) for the BOS matrices to satisfy RIP. There exist series of results establishing the RIP for BOS matrices during the last 15 year. We have reviewed these results in the Appendix~\ref{appendix: prior works CS}. Second, instead of LASSO, one can use any sparse recovery method (such as IHT~\cite{blumensath2009iterative}) in our algorithm. In essence, Theorem~\ref{thm:recovery on main} provides a successful guarantee for recovering the optimal sub-network of a given size given a sufficient number of performance measurements.




\section{NAS Experimental Results}
\label{NAS_exp}

We experiment on image classification NAS problems: CNN search on CIFAR-10, CIFAR-100, Fashion MNIST and SVHN. We describe the training details for CIFAR-10 in Sections~\ref{exp: CoNAS Exp Results CNN}. Our evaluation setup for training the final architecture (CIFAR-10) is the same as that reported in DARTS and RSWS.

\subsection{Convolutional Neural Network}
\label{exp: CoNAS Exp Results CNN}
\textbf{Architecture Search.} We create a one-shot architecture similar to RSWS with a cell containing $N=7$ nodes with two nodes as input and one node as output; our wiring rules between nodes are different and as in Figure~\ref{fig:diagram}. We used five operations: $3 \times 3$ and $5 \times 5$ separable convolutions, $3 \times 3$ max pooling, $3 \times 3$ average pooling, and Identity. 
On CIFAR-10, we equally divide the $50,000$-sample training set to training and validation sets to train one-shot super-network, following \cite{li2019random} and \cite{liu2018darts}. We train a one-shot model by sampling the random sub-graph under $\text{Bernoulli}
(0.5)$ sampling with eight layers and $16$ initial channels for $100$ training epochs. 
All other hyperparameters used in training the one-shot model are the same as in \cite{li2019random}. 



We run CoNAS in two different settings to find small and large size CNN cells. Specifically, we use the sparsity parameters $s=10$, Fourier basis degree $d=2$, and Lasso coefficient $\lambda=1$ (We include experiments with varying lasso coefficients in Subsection~\ref{exp: Lasso Stability}). As a result, we found the \emph{normal cell} and \emph{reduce cell} with one sparse recovery stage as shown in Appendix~\ref{fig:cnn cell} (the larger CNN cells were found with multiple sparse recovery stages). Repeating four stages ($t=4$) of sparse recovery with restriction in definition~(\ref{def:restriction}) returns an architecture encoder $\bm{\alpha^*}$ with numerous operation edges in the cells (Please see Figure~\ref{fig:cnn cell, t=4} for the found architecture). Now, we evaluate the model found by CoNAS as follows:


\textbf{Architecture Evaluation.}
\label{sec: architecture evaluation}
We re-train the final architecture with the learned cell and with the same hyperparameter configurations in DARTS to make the direct comparisons. We use NVIDIA TITAN X, GTX 1080, and Tesla V100 for final architecture training process. We use TITAN X for searching the architecture for our experiment to conduct the fair comparison on search time. CONAS search time in Table~\ref{table: CIFAR10 Comparison Table} includes both training an one-shot model and gathering measurements. CoNAS cells from four sparse recovery stages (t=4) cannot use the same minibatch size (i.e., 96) used in DARTs and RSWS, due to the hardware constraint; instead, we re-train the final model with minibatch size 56 with TITAN X. 

CoNAS architecture with one sparse recovery (t=1) outperforms DARTs and RSWS (\textcolor{black}{stronger than vanilla random search}) in test errors with smaller parameters, multiply-addition operations, and search time. In addition, CoNAS with four recovery stages (t=4) performs better than CoNAS (t=1) on both lower test error average and deviation; however, it requires larger parameters and multiply-add operations compared to DARTs, RSWS, and CoNAS (t=1). 
We also train CoNAS (t=1) with increasing the number of channels from 36 to 60 and training epochs from 600 to 1,000 together with a recent data augmentation technique called AutoAugment~\cite{cubuk2018autoaugment}, which breaks through $2\%$ test error barrier on CIFAR-10.

\begin{table}[!t]
    \centering
    \captionsetup{width=\linewidth}
    \caption{\sl \textbf{Image Classification Test Error of CoNAS on Multiple Datasets}. We compare the performance of CoNAS on different datasets with existing NAS results. The experiment details for CoNAS is described in Appendix~\ref{appendix: training detail}.} 
    \label{table: Additional Dataset Table}
    \begin{threeparttable}
    \begin{tabular}{c c c c c c}
        \toprule
        & \multicolumn{1}{c}{\textbf{CIFAR100}} & \textbf{SVHN} & \textbf{F-MNIST}& \textbf{Params} & \textbf{Search} \\
        \multicolumn{1}{l}{\textbf{Arch}} & (\%) & (\%) & (\%) & (M) & (GPU) \\ 
        \midrule
        \multicolumn{1}{l}{\cite{xie2018snas}} & $16.5$ & $1.98$ & $3.73$ & $2.8$ & $1.5$ \\
        \multicolumn{1}{l}{\cite{liu2018progressive}} & $15.9$ & $1.83$ & $3.72$ & $3.2$ & $150$ \\
        \multicolumn{1}{l}{\cite{zoph2018learning}} & $15.8$ & $1.96$ & $3.71$ & $3.3$ & $1800$\\
        \multicolumn{1}{l}{\cite{liu2018darts}} & $15.8$ & $1.85$ & $3.68$ & $3.4$ & $1$ \\
        \multicolumn{1}{l}{\cite{real2018regularized}} & $15.9$ & $1.93$ & $3.8$ & $3.2$ & $3150$ \\
        \multicolumn{1}{l}{\cite{noy2019asap}} & $15.6$ & $1.81$ & $3.73$ & $2.5$ & 0.2 \\ 
        \midrule
        \multicolumn{1}{l}{CoNAS} & $15.9$ & 1.44 & $4.11$ & 2.3 & 0.4 \\
        \bottomrule
    \end{tabular}
    \begin{tablenotes}
    \small
    \item Experimental result of \cite{xie2018snas}, \cite{liu2018progressive}, \cite{zoph2018learning}, \cite{liu2018darts}, \cite{real2018regularized}, \cite{noy2019asap} are taken from \cite{noy2019asap}.
    \end{tablenotes}
    \end{threeparttable}
\end{table}

\subsection{Transfer to other datasets}
\label{exp:other dataset}

We test the cell found from CIFAR-10 to evaluate the transferability to different datasets: CIFAR-100, SVHN, and Fashion-MNIST in Table~\ref{table: Additional Dataset Table}. 
As we can see, CoNAS achieves the competitive results with the smallest architecture size (equivalent to number of parameter) compared to the other algorithms.

\subsection{Stability on Lasso Parameters}
\label{exp: Lasso Stability}
We check our algorithm's stability on lasso parameter by observing the solution given exact same measurements. Denote $\alpha^*_{\lambda=l}$ as the architecture encoded output from CoNAS given $\lambda=l$. We compare the hamming distance and the test error between $\alpha^*_{\lambda=1}$ and other $\lambda$ values ($\lambda = 0.5, 2, 5, 10$). The average support of the solution from one sparse recovery is 15 out of the 140 length. The average hamming distance between two randomly generated binary strings with $\text{supp}(\alpha^*)=15$ from $100,000$ samples was $27.58\pm1.82$. Our experiment shows a stable performance under various lasso parameters with small hamming distances regards to various $\lambda$. Also we measure the average test error with 150 training epochs on different $\lambda$ values as shown in Table~\ref{table:lasso parameter testing}. For the baseline comparison, we compare CoNAS solutions with the randomly chosen architecture with 15 operations (edges).

\begin{table}[!t]
    \caption{Lasso Parameter Stability Experiment.}
    \label{table:lasso parameter testing}
    \centering
    \resizebox{0.7\linewidth}{!}{
    \begin{tabular}{c c c c c c}
    \toprule
        Criteria & $\lambda = 0.5$ & $\lambda = 2.0$ & $\lambda = 5.0$ & $\lambda = 10.0$ & Random\\
    \midrule
        \multicolumn{1}{l}{Hamming Dist.} & $0$ & $0$ & $8$ & $12$ & 29\\
        \multicolumn{1}{l}{Test Error (\%)} & $3.74$ & $3.74$ & $3.51$ & $3.62$ & $4.43$ \\
        \multicolumn{1}{l}{Param (M)} & 2.3 & 2.3 & 2.6 & 2.6 & 2.7\\
        \multicolumn{1}{l}{MADD (M)} & 386 & 386 & 455 & 449 & 444\\
    \bottomrule
    \end{tabular}
    }
\end{table}

\subsection{Discussion}
Noticeably, CoNAS achieves improved results on CIFAR-10 in both test error and search cost when compared to the previous state-of-the-art algorithms: DARTs, RSWS, and ENAS. In addition, not only CoNAS finds the cell with smallest parameter size and multiply-add operations than the other NAS approaches, but also it obtains a better test error with $2.57\%$.
Many previous NAS papers have focused on the search strategy, while adopted the same search space to~\cite{zoph2018learning} and~\cite{liu2018darts}. Our experimental results highlight the importance of both seeking new performance strategies and the search space. 


\section{Conclusions}
\label{conc}
In this paper, we considered the problems of hyperparameter optimization and neural architecture search through the lens of compressive-sensing. As our primary contribution, we first extended Harmonica algorithm by introducing the new log-linear representation for numerical hyperparameters by posing group sparsity in hyperparameters space. We support our algorithms by providing some experiments for the classification task and by visualizing the reduction of the hyperparameters space. We also tackled neural architecture search problem and proposed CoNAS, which expresses the surrogate function of the one-shot super-network via Boolean loss function. 
We supported our NAS algorithm with a theoretical analysis and extensive experimental results on convolutional networks. Several interesting future works remain, including applying the boolean function scheme to the other neural network architectures such as recurrent neural network, generative adversarial network, and transformers. Moreover, extending the boolean function idea to the neural network compression scheme (which attempts to prune the weights) by finding the appropriate binary mask which Hadamard product to weights while achieving minor (or no) degrade of performance will be interesting direction of future study.



%

\appendix
\section{Appendix}
\subsection{Algorithms for SH and Hyperband}
\label{appendix: SH and Hyperband}

In this section, we include the pseudo algorithm including Successive Halving (Algorithm~\ref{alg: Successive halving}) and Hyperband (Algorithm~\ref{alg: Hyperband}).

\begin{algorithm}[!ht]
\caption{Successive Halving (SH) from ~\cite{jamieson2016non}}
\label{alg: Successive halving}
\begin{algorithmic}[1]
    \State \textbf{Input: } Resource $R$, scaling factor $\eta$
    \State \textbf{Initialization: } $s_{max}=\lfloor \log_{\eta}(R)\rfloor$, $B=(s_{max}+1)R$
    \State $n=R$, $r=1$
    \State T = {\fontfamily{pcr}\selectfont sample\_configuration(n)}
    \For{$i \in \{0,\ldots,s_{max}\}$}
        \State $n_i = \lfloor n\eta^{-i}\rfloor$
        \State $r_i = r\eta^i$
        \State $L = \{f(t,r_i): t \in T\}$
        \State $T = \text{top}_k(T,L,\lfloor \frac{n_i}{\eta} \rfloor)$
    \EndFor
    \State \Return{Configuration with the smallest loss}
\end{algorithmic}
\end{algorithm}

\begin{algorithm}[t]
\caption{\textsc{Hyperband from \cite{li2017hyperband}}}
\label{alg: Hyperband}
\begin{algorithmic}[1]
\State\textbf{Inputs:} Resource $R$, scaling factor $\eta$
\State\textbf{Initialization:} $s_{max}=\lfloor \log_{\eta}(R)\rfloor$, $B=(s_{max}+1)R$
\For{$s \in \{s_{max}, s_{max}-1,\ldots,0\}$}
\State $n=\lceil \frac{B}{R} \frac{\eta^s}{(s+1)}\rceil$, $r=R\eta^{-s}$
\State T = {\fontfamily{pcr}\selectfont sample\_configuration(n)}
\For{$i \in \{0,\ldots,s\}$}
\State $n_i = \lfloor n\eta^{-i}\rfloor$
\State $r_i = r\eta^i$
\State $L = \{f(t,r_i): t \in T\}$
\State $T = \text{top}_k(T,L,\lfloor \frac{n_i}{\eta} \rfloor)$
\EndFor
\EndFor
\State\textbf{return} Configuration with the smallest loss
\end{algorithmic}
\end{algorithm}

\subsection{Prior Works on Recovery Conditions on Compressive Sensing}
\label{appendix: prior works CS}
There has been significant research during the last decade in proving upper bounds on the number of rows of bounded orthonormal dictionaries (matrix $\mathbf{A}$) for which $\mathbf{A}$ is guaranteed to satisfy the restricted isometry property with high probability. One of the first BOS results was established by~\cite{candes2006near}, where the authors proved an upper bound scales as $\mathcal{O}(sd^6\log^6 n)$ for a subsampled Fourier matrix. While this result is seminal, it is only optimal up to some \textit{polylog} factors. In fact, the authors in chapter $12$ of \cite{foucart2017mathematical} have shown a necessary condition (lower bound) on the number of rows of BOS which scales as $\mathcal{O}(sd\log n)$. In an attempt to achieve to this lower bound, the result in~\cite{candes2006near} was further improved by~\cite{rudelson2008sparse} to $\mathcal{O}(sd\log^2s\log(sd\log n)\log n)$. Motivated by this result, \cite{cheraghchi2013restricted} has even reduced the gap further by proving an upper bound on the number of rows as $\mathcal{O}(sd\log^3 s\log n)$.  The best known available upper bound on the number of rows appears to be $\mathcal{O}(sd^2\log s\log^2n)$; however with worse dependency on the constant $\delta$, i.e., $\delta^{-4}$ (please see~\cite{bourgain2014improved}). To the best of our knowledge, the best known result with mild dependency on $\delta$ (i.e., $\delta^{-2}$) is due to~\cite{haviv2017restricted}, and is given by $\mathcal{O}(sd\log^2s\log n)$. We have used this result for proving Theorem~\ref{thm:recovery on main}. 

\subsection{Training Details on other Datasets}
\label{appendix: training detail}

\subsubsection{CIFAR-100} This dataset is extended version of CIFAR-10 with 100 classes containing 600 images each. Similar to CIFAR-10, CIFAR100 consists of 60,000 color images which splits into 50,000 training images and 10,000 test images. Following~\cite{liu2018darts}, we train the architecture with 20 stacked cells equivalent to CIFAR-10 setting. We train the architecture for 600 epochs with cosine annealing learning rate where the initial value is 0.025. We use a batch size 96, SGD optimizer with nestrov-momentum of 0.9, and auxiliary tower with weights 0.4. For the regularization technique, we include path dropout with probability 0.2, cutout regularizer with length 16, and AutoAugment~\cite{cubuk2018autoaugment} for CIFAR-100. Except AutoAugment, the training setup is identical to DARTs for CIFAR-10.

\subsubsection{Street View House Numbers (SVHN)} SVHN is a digit recognition dataset of house numbers obtained from Google Street View images. SVHN consists of 73,257 train digit images, 26,032 test digit images, and additional 531,131 images. We used both train and extra (total 604,388) images for the training the architecture. Due to the large dataset, we train the architecture for 160 epochs (equivalent to ) and other hyperparameter setup is equivalent to CIFAR-100.

\subsubsection{Fashion-MNIST} Fashion-MNIST consists of 60,000 grayscale training images and 10,000 test images with size $28 \times 28$, classified in 10 classes of objects. Training hyperparameter setup of the final architecture is equivalent to CIFAR-10 without AutoAugment~\cite{cubuk2018autoaugment}.  

We list the network architectures found from our experiments which were not included in the main section.
\begin{figure}[!ht]
    \centering
    \subfloat[Normal Cell]{\includegraphics[width=1.0\linewidth]{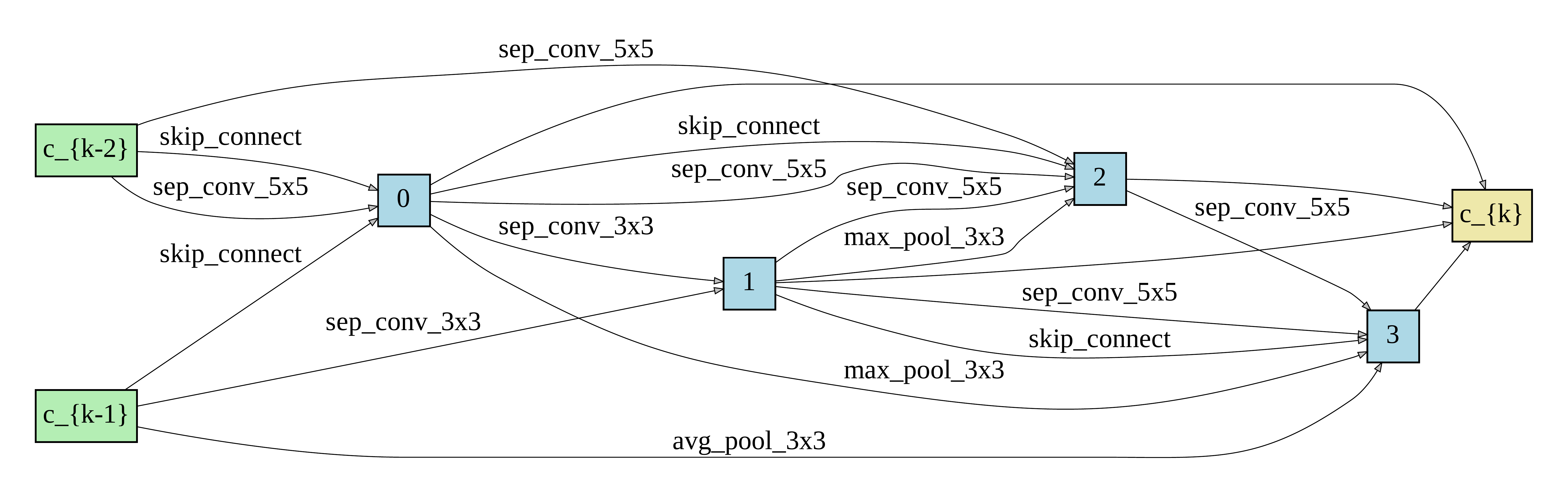}} 
    \vfil
    \subfloat[Reduce Cell]{\includegraphics[width=1.0\linewidth]{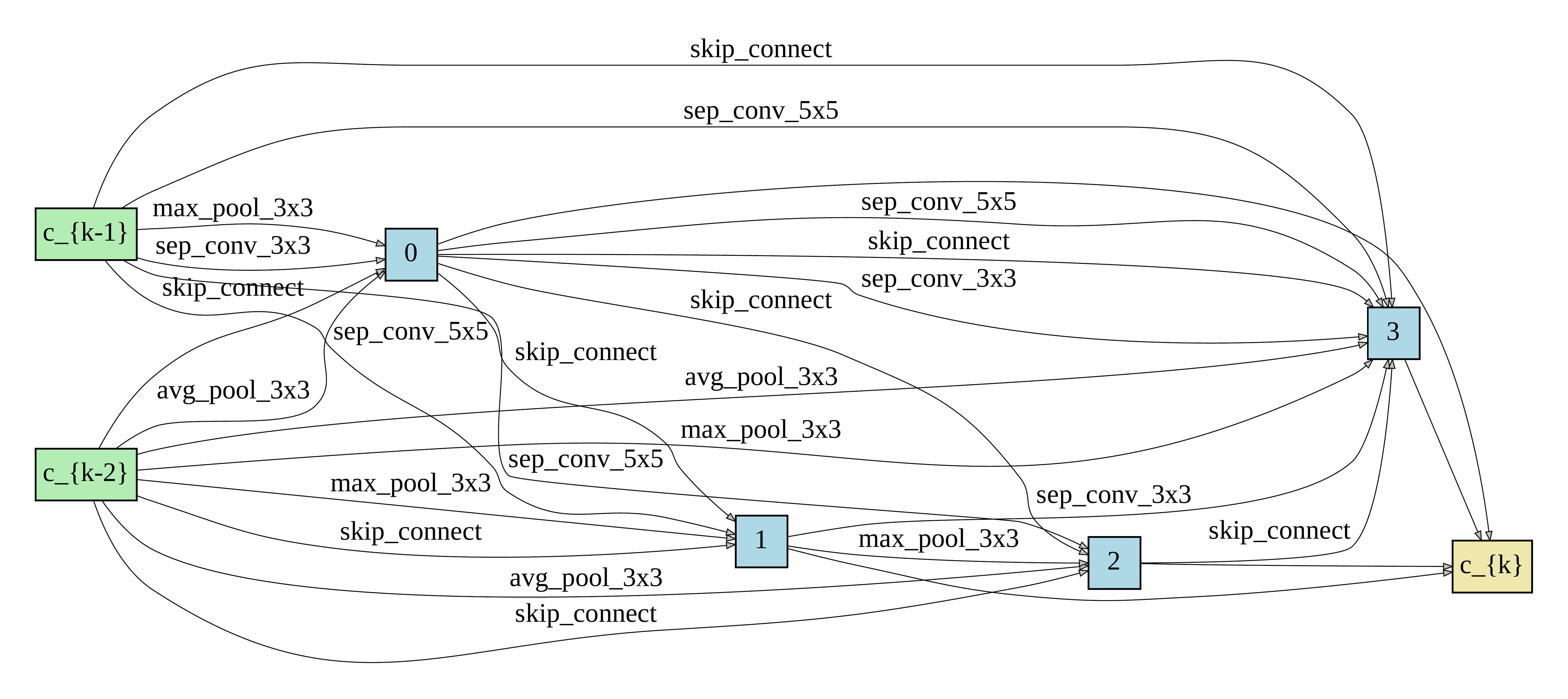}}
    \caption{Convolution Cell found from CoNAS (t=4)}
    \label{fig:cnn cell, t=4}
\end{figure}


\begin{figure}[!ht]
    \centering
    \subfloat[Normal Cell]{\includegraphics[width=0.9\linewidth]{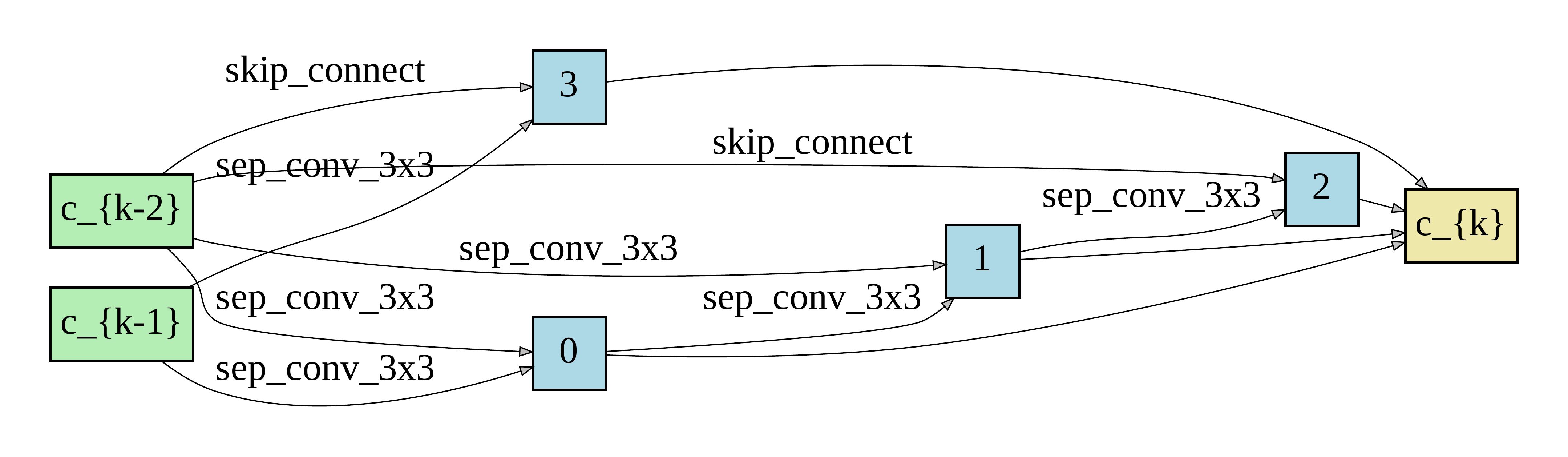}} \\
    \subfloat[Reduce Cell]{\includegraphics[width=0.9\linewidth]{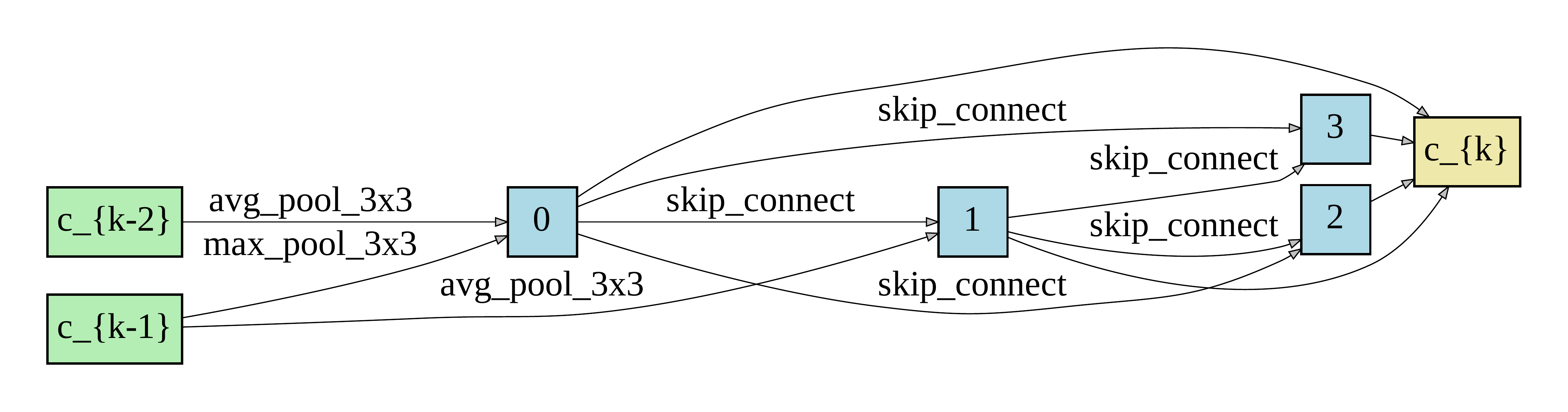}}
    \caption{Convolutional Cell found from DARTs with the original setting in \cite{liu2018darts}.}
\end{figure}

\newpage



\bibliographystyle{amsalpha}
\bibliography{main}

\newcommand{\etalchar}[1]{$^{#1}$}
\providecommand{\bysame}{\leavevmode\hbox to3em{\hrulefill}\thinspace}
\providecommand{\MR}{\relax\ifhmode\unskip\space\fi MR }
\providecommand{\MRhref}[2]{%
  \href{http://www.ams.org/mathscinet-getitem?mr=#1}{#2}
}
\providecommand{\href}[2]{#2}
\begin{thebibliography}{HLVDMW17}

\bibitem[BAPB17]{bertrandhyperparameter}
Hadrien Bertrand, Roberto Ardon, Matthieu Perrot, and Isabelle Bloch,
  \emph{Hyperparameter optimization of deep neural networks: Combining
  hyperband with bayesian model selection}.

\bibitem[BB12]{bergstra2012random}
James Bergstra and Yoshua Bengio, \emph{Random search for hyper-parameter
  optimization}, Journal of Machine Learning Research \textbf{13} (2012),
  no.~Feb, 281--305.

\bibitem[BBBK11]{bergstra2011algorithms}
James~S Bergstra, R{\'e}mi Bardenet, Yoshua Bengio, and Bal{\'a}zs K{\'e}gl,
  \emph{Algorithms for hyper-parameter optimization}, Advances in neural
  information processing systems, 2011, pp.~2546--2554.

\bibitem[BD09]{blumensath2009iterative}
Thomas Blumensath and Mike~E Davies, \emph{Iterative hard thresholding for
  compressed sensing}, Applied and computational harmonic analysis (2009).

\bibitem[Ben00]{bengio2000gradient}
Yoshua Bengio, \emph{Gradient-based optimization of hyperparameters}, Neural
  computation \textbf{12} (2000), no.~8, 1889--1900.

\bibitem[BKZ{\etalchar{+}}18]{bender2018understanding}
Gabriel Bender, Pieter-Jan Kindermans, Barret Zoph, Vijay Vasudevan, and Quoc
  Le, \emph{Understanding and simplifying one-shot architecture search}, Proc.
  Int. Conf. Machine Learning (2018).

\bibitem[Bou14]{bourgain2014improved}
Jean Bourgain, \emph{An improved estimate in the restricted isometry problem},
  Geometric Aspects of Functional Analysis: Israel Seminar (GAFA) (2014), 65.

\bibitem[Can08]{candes2008restricted}
Emmanuel~J Candes, \emph{The restricted isometry property and its implications
  for compressed sensing}, Comptes rendus mathematique \textbf{346} (2008),
  no.~9-10, 589--592.

\bibitem[CCZ{\etalchar{+}}18]{cai2018efficient}
Han Cai, Tianyao Chen, Weinan Zhang, Yong Yu, and Jun Wang, \emph{Efficient
  architecture search by network transformation}, Proc. Assoc. Adv. Art.
  Intell. (AAAI) (2018).

\bibitem[CGV13]{cheraghchi2013restricted}
Mahdi Cheraghchi, Venkatesan Guruswami, and Ameya Velingker, \emph{Restricted
  isometry of fourier matrices and list decodability of random linear codes},
  SIAM Journal on Computing (2013).

\bibitem[CH19]{cho2019reducing}
Minsu Cho and Chinmay Hegde, \emph{Reducing the search space for hyperparameter
  optimization using group sparsity}, Proc. IEEE Int. Conf. Acoust., Speech,
  and Signal Processing (ICASSP) (2019).

\bibitem[CSH19]{cho2019one}
Minsu Cho, Mohammadreza Soltani, and Chinmay Hegde, \emph{One-shot neural
  architecture search via compressive sensing}, arXiv preprint arXiv:1906.02869
  (2019).

\bibitem[CT06]{candes2006near}
Emmanuel~J Candes and Terence Tao, \emph{Near-optimal signal recovery from
  random projections: Universal encoding strategies?}, IEEE Trans. Inform.
  Theory (2006).

\bibitem[CZH19]{cai2018proxylessnas}
Han Cai, Ligeng Zhu, and Song Han, \emph{Proxylessnas: Direct neural
  architecture search on target task and hardware}, Proc. Int. Conf. Learning
  Representations (2019).

\bibitem[CZM{\etalchar{+}}19]{cubuk2018autoaugment}
Ekin~D Cubuk, Barret Zoph, Dandelion Mane, Vijay Vasudevan, and Quoc~V Le,
  \emph{Autoaugment: Learning augmentation policies from data}, {IEEE} Conf.
  Comp. Vision and Pattern Recog (2019).

\bibitem[D{\etalchar{+}}06]{donoho2006compressed}
David~L Donoho et~al., \emph{Compressed sensing}, IEEE Trans. Inform. Theory
  (2006).

\bibitem[EFH{\etalchar{+}}13]{eggensperger2013towards}
Katharina Eggensperger, Matthias Feurer, Frank Hutter, James Bergstra, Jasper
  Snoek, Holger Hoos, and Kevin Leyton-Brown, \emph{Towards an empirical
  foundation for assessing bayesian optimization of hyperparameters}, NIPS
  workshop on Bayesian Optimization in Theory and Practice, vol.~10, 2013,
  p.~3.

\bibitem[EMH18]{elsken2018neural}
Thomas Elsken, Jan~Hendrik Metzen, and Frank Hutter, \emph{Neural architecture
  search: A survey}, arXiv preprint arXiv:1808.05377 (2018).

\bibitem[EMH19]{elsken2018efficient}
\bysame, \emph{Efficient multi-objective neural architecture search via
  lamarckian evolution}, Proc. Int. Conf. Learning Representations (2019).

\bibitem[FDFP17]{franceschi2017forward}
Luca Franceschi, Michele Donini, Paolo Frasconi, and Massimiliano Pontil,
  \emph{Forward and reverse gradient-based hyperparameter optimization}, arXiv
  preprint arXiv:1703.01785 (2017).

\bibitem[FKH18]{falkner2018bohb}
Stefan Falkner, Aaron Klein, and Frank Hutter, \emph{Bohb: Robust and efficient
  hyperparameter optimization at scale}, arXiv preprint arXiv:1807.01774
  (2018).

\bibitem[FLF{\etalchar{+}}16]{fu2016drmad}
Jie Fu, Hongyin Luo, Jiashi Feng, Kian~Hsiang Low, and Tat-Seng Chua,
  \emph{Drmad: Distilling reverse-mode automatic differentiation for optimizing
  hyperparameters of deep neural networks}, arXiv preprint arXiv:1601.00917
  (2016).

\bibitem[FR17]{foucart2017mathematical}
Simon Foucart and Holger Rauhut, \emph{A mathematical introduction to
  compressive sensing}, Bull. Am. Math \textbf{54} (2017), 151--165.

\bibitem[HHLB11]{hutter2011sequential}
Frank Hutter, Holger~H Hoos, and Kevin Leyton-Brown, \emph{Sequential
  model-based optimization for general algorithm configuration}, International
  Conference on Learning and Intelligent Optimization, Springer, 2011,
  pp.~507--523.

\bibitem[HKY17]{hazan2017hyperparameter}
Elad Hazan, Adam Klivans, and Yang Yuan, \emph{Hyperparameter optimization: a
  spectral approach}, arXiv preprint arXiv:1706.00764 (2017).

\bibitem[HLC{\etalchar{+}}19]{hu2019efficient}
Hanzhang Hu, John Langford, Rich Caruana, Saurajit Mukherjee, Eric Horvitz, and
  Debadeepta Dey, \emph{Efficient forward architecture search}, Adv. Neural
  Inf. Proc. Sys. (NeurIPS) (2019).

\bibitem[HLVDMW17]{huang2017densely}
Gao Huang, Zhuang Liu, Laurens Van Der~Maaten, and Kilian~Q Weinberger,
  \emph{Densely connected convolutional networks}, {IEEE} Conf. Comp. Vision
  and Pattern Recog (2017).

\bibitem[HR16]{haviv2016list}
Ishay Haviv and Oded Regev, \emph{The list-decoding size of fourier-sparse
  boolean functions}, ACM Transactions on Computation Theory (TOCT) \textbf{8}
  (2016), no.~3, 10.

\bibitem[HR17]{haviv2017restricted}
\bysame, \emph{The restricted isometry property of subsampled fourier
  matrices}, Geometric Aspects of Functional Analysis (2017), 163--179.

\bibitem[HZRS16]{he2016deep}
Kaiming He, Xiangyu Zhang, Shaoqing Ren, and Jian Sun, \emph{Deep residual
  learning for image recognition}, {IEEE} Conf. Comp. Vision and Pattern Recog
  (2016).

\bibitem[IAFS17]{ilievski2017efficient}
Ilija Ilievski, Taimoor Akhtar, Jiashi Feng, and Christine~Annette Shoemaker,
  \emph{Efficient hyperparameter optimization for deep learning algorithms
  using deterministic rbf surrogates.}, AAAI, 2017, pp.~822--829.

\bibitem[JDO{\etalchar{+}}17]{jaderberg2017population}
Max Jaderberg, Valentin Dalibard, Simon Osindero, Wojciech~M Czarnecki, Jeff
  Donahue, Ali Razavi, Oriol Vinyals, Tim Green, Iain Dunning, Karen Simonyan,
  et~al., \emph{Population based training of neural networks}, arXiv preprint
  arXiv:1711.09846 (2017).

\bibitem[JSH18]{jin2018efficient}
Haifeng Jin, Qingquan Song, and Xia Hu, \emph{Efficient neural architecture
  search with network morphism}, arXiv preprint arXiv:1806.10282 (2018).

\bibitem[JT16]{jamieson2016non}
Kevin Jamieson and Ameet Talwalkar, \emph{Non-stochastic best arm
  identification and hyperparameter optimization}, Artificial Intelligence and
  Statistics, 2016, pp.~240--248.

\bibitem[KDVN18]{kumar2018parallel}
Manoj Kumar, George~E Dahl, Vijay Vasudevan, and Mohammad Norouzi,
  \emph{Parallel architecture and hyperparameter search via successive halving
  and classification}, arXiv preprint arXiv:1805.10255 (2018).

\bibitem[LBGR15]{luketina2015scalable}
Jelena Luketina, Mathias Berglund, Klaus Greff, and Tapani Raiko,
  \emph{Scalable gradient-based tuning of continuous regularization
  hyperparameters}, arXiv preprint arXiv:1511.06727 (2015).

\bibitem[LJD{\etalchar{+}}17]{li2017hyperband}
Lisha Li, Kevin Jamieson, Giulia DeSalvo, Afshin Rostamizadeh, and Ameet
  Talwalkar, \emph{Hyperband: A novel bandit-based approach to hyperparameter
  optimization}, The Journal of Machine Learning Research \textbf{18} (2017),
  no.~1, 6765--6816.

\bibitem[LJR{\etalchar{+}}18]{li2018massively}
Liam Li, Kevin Jamieson, Afshin Rostamizadeh, Ekaterina Gonina, Moritz Hardt,
  Benjamin Recht, and Ameet Talwalkar, \emph{Massively parallel hyperparameter
  tuning}, arXiv preprint arXiv:1810.05934 (2018).

\bibitem[LSV{\etalchar{+}}17]{liu2017hierarchical}
Hanxiao Liu, Karen Simonyan, Oriol Vinyals, Chrisantha Fernando, and Koray
  Kavukcuoglu, \emph{Hierarchical representations for efficient architecture
  search}, arXiv preprint arXiv:1711.00436 (2017).

\bibitem[LSY18]{liu2018darts}
Hanxiao Liu, Karen Simonyan, and Yiming Yang, \emph{Darts: Differentiable
  architecture search}, Proc. Int. Conf. Machine Learning (2018).

\bibitem[LT19]{li2019random}
Liam Li and Ameet Talwalkar, \emph{Random search and reproducibility for neural
  architecture search}, arXiv preprint arXiv:1902.07638 (2019).

\bibitem[LTQ{\etalchar{+}}18]{luo2018neural}
Renqian Luo, Fei Tian, Tao Qin, Enhong Chen, and Tie-Yan Liu, \emph{Neural
  architecture optimization}, Adv. Neural Inf. Proc. Sys. (NeurIPS) (2018).

\bibitem[LZN{\etalchar{+}}18]{liu2018progressive}
Chenxi Liu, Barret Zoph, Maxim Neumann, Jonathon Shlens, Wei Hua, Li-Jia Li,
  Li~Fei-Fei, Alan Yuille, Jonathan Huang, and Kevin Murphy, \emph{Progressive
  neural architecture search}, Euro. Conf. Comp. Vision (2018).

\bibitem[MDA15]{maclaurin2015gradient}
Dougal Maclaurin, David Duvenaud, and Ryan Adams, \emph{Gradient-based
  hyperparameter optimization through reversible learning}, International
  Conference on Machine Learning, 2015, pp.~2113--2122.

\bibitem[NNR{\etalchar{+}}19]{noy2019asap}
Asaf Noy, Niv Nayman, Tal Ridnik, Nadav Zamir, Sivan Doveh, Itamar Friedman,
  Raja Giryes, and Lihi Zelnik-Manor, \emph{Asap: Architecture search, anneal
  and prune}, arXiv preprint arXiv:1904.04123 (2019).

\bibitem[O'D14]{o2014analysis}
Ryan O'Donnell, \emph{Analysis of boolean functions}, Cambridge University
  Press, 2014.

\bibitem[PGZ{\etalchar{+}}18]{pham2018efficient}
Hieu Pham, Melody~Y Guan, Barret Zoph, Quoc~V Le, and Jeff Dean,
  \emph{Efficient neural architecture search via parameter sharing}, Proc. Int.
  Conf. Machine Learning (2018).

\bibitem[RAHL19]{real2018regularized}
Esteban Real, Alok Aggarwal, Yanping Huang, and Quoc~V Le, \emph{Regularized
  evolution for image classifier architecture search}, Proc. Assoc. Adv. Art.
  Intell. (AAAI) (2019).

\bibitem[RV08]{rudelson2008sparse}
Mark Rudelson and Roman Vershynin, \emph{On sparse reconstruction from fourier
  and gaussian measurements}, Communications on Pure and Applied Mathematics: A
  Journal Issued by the Courant Institute of Mathematical Sciences (2008).

\bibitem[SK12]{stobbe2012learning}
Peter Stobbe and Andreas Krause, \emph{Learning fourier sparse set functions},
  Proc. Int. Conf. Art. Intell. Stat. (AISTATS) (2012).

\bibitem[SLA12]{snoek2012practical}
Jasper Snoek, Hugo Larochelle, and Ryan~P Adams, \emph{Practical bayesian
  optimization of machine learning algorithms}, Advances in neural information
  processing systems, 2012, pp.~2951--2959.

\bibitem[SLJ{\etalchar{+}}15]{szegedy2015going}
Christian Szegedy, Wei Liu, Yangqing Jia, Pierre Sermanet, Scott Reed, Dragomir
  Anguelov, Dumitru Erhan, Vincent Vanhoucke, and Andrew Rabinovich,
  \emph{Going deeper with convolutions}, {IEEE} Conf. Comp. Vision and Pattern
  Recog (2015).

\bibitem[SSZA14]{snoek2014input}
Jasper Snoek, Kevin Swersky, Rich Zemel, and Ryan Adams, \emph{Input warping
  for bayesian optimization of non-stationary functions}, Proc. Int. Conf.
  Machine Learning, 2014, pp.~1674--1682.

\bibitem[SYJ{\etalchar{+}}19]{sciuto2019evaluating}
Christian Sciuto, Kaicheng Yu, Martin Jaggi, Claudiu Musat, and Mathieu
  Salzmann, \emph{Evaluating the search phase of neural architecture search},
  arXiv preprint arXiv:1902.08142 (2019).

\bibitem[THHLB13]{thornton2013auto}
Chris Thornton, Frank Hutter, Holger~H Hoos, and Kevin Leyton-Brown,
  \emph{Auto-weka: Combined selection and hyperparameter optimization of
  classification algorithms}, Proceedings of the 19th ACM SIGKDD international
  conference on Knowledge discovery and data mining, ACM, 2013, pp.~847--855.

\bibitem[Tib96]{tibshirani1996regression}
Robert Tibshirani, \emph{Regression shrinkage and selection via the lasso},
  Journal of the Royal Statistical Society. Series B (Methodological) (1996),
  267--288.

\bibitem[WXW18]{wang2018combination}
Jiazhuo Wang, Jason Xu, and Xuejun Wang, \emph{Combination of hyperband and
  bayesian optimization for hyperparameter optimization in deep learning},
  arXiv preprint arXiv:1801.01596 (2018).

\bibitem[XKGH19]{xie2019exploring}
Saining Xie, Alexander Kirillov, Ross Girshick, and Kaiming He, \emph{Exploring
  randomly wired neural networks for image recognition}, arXiv preprint
  arXiv:1904.01569 (2019).

\bibitem[XZLL18]{xie2018snas}
Sirui Xie, Hehui Zheng, Chunxiao Liu, and Liang Lin, \emph{Snas: stochastic
  neural architecture search}, arXiv preprint arXiv:1812.09926 (2018).

\bibitem[YIAK18]{yamada2018shakedrop}
Yoshihiro Yamada, Masakazu Iwamura, Takuya Akiba, and Koichi Kise,
  \emph{Shakedrop regularization for deep residual learning}, Proc. Int. Conf.
  Learning Representations Workshop (2018).

\bibitem[YL06]{yuan2006model}
Ming Yuan and Yi~Lin, \emph{Model selection and estimation in regression with
  grouped variables}, Journal of the Royal Statistical Society: Series B
  (Statistical Methodology) \textbf{68} (2006), no.~1, 49--67.

\bibitem[ZL17]{zoph2016neural}
Barret Zoph and Quoc~V Le, \emph{Neural architecture search with reinforcement
  learning}, Proc. Int. Conf. Learning Representations (2017).

\bibitem[ZRU18]{zhang2018graph}
Chris Zhang, Mengye Ren, and Raquel Urtasun, \emph{Graph hypernetworks for
  neural architecture search}, Proc. Int. Conf. Learning Representations
  (2018).

\bibitem[ZVSL18]{zoph2018learning}
Barret Zoph, Vijay Vasudevan, Jonathon Shlens, and Quoc~V Le, \emph{Learning
  transferable architectures for scalable image recognition}, {IEEE} Conf.
  Comp. Vision and Pattern Recog (2018).

\end{thebibliography}
\end{document}